\definecolor{myBlue}{RGB}{18, 54, 147}
\definecolor{myTeal}{RGB}{216, 95, 225}
\definecolor{myPurple}{RGB}{0, 141, 222}
\definecolor{myYellow}{RGB}{225, 20, 75}
\definecolor{myOrange}{RGB}{255,95,25}
\definecolor{myRed}{RGB}{225,20,75}
\newcommand{\colorpolicy}{black}
\newcommand{\policy}{{\color{\colorpolicy} \pi}}
\newcommand{\colorpolicyset}{black}
\newcommand{\policyset}{{\color{\colorpolicyset} \Pi}}
\newcommand{\colorpopulation}{myTeal}
\newcommand{\population}{{\color{\colorpopulation} \mathcal{P}^t}}
\newcommand{\populationprev}{{\color{\colorpopulation} \mathcal{P}^{t-1}}}
\newcommand{\populationnot}{{\color{\colorpopulation} \mathcal{P}^0}}
\newcommand{\colorpayoff}{myPurple}
\newcommand{\payoff}{{\color{\colorpayoff} \mathbf{P}}}
\newcommand{\fpayoff}{{\color{\colorpayoff} \mathbf{f}}}
\newcommand{\colorcluster}{myOrange}
\newcommand{\cluster}{{\color{\colorcluster} \mathrm{C}}}
\newcommand{\layer}{{\color{\colorcluster} \mathrm{L}}}
\newcommand{\colorn}{myRed}
\newcommand{\nbit}{{\color{\colorn} n}}
\newcommand{\ispreprint}{1}
\definecolor{ao}{rgb}{0.0, 0.5, 0.0}
\title{Real World Games Look Like Spinning Tops}
\author{%
Wojciech Marian Czarnecki\\
DeepMind\\
London\\
\And
Gauthier Gidel\\
DeepMind\\
London\\
\And
Brendan Tracey\\
DeepMind\\
London\\
\And
Karl Tuyls\\
DeepMind\\
Paris\\
\And
Shayegan Omidshafiei\\
DeepMind\\
Paris\\
\And
David Balduzzi\\
DeepMind\\
London\\
\And
Max Jaderberg\\
DeepMind\\
London\\
}
\begin{document}

\maketitle

\begin{abstract}
This paper investigates the geometrical properties of real world games (e.g. Tic-Tac-Toe, Go, StarCraft II).
We hypothesise that their geometrical structure resembles a spinning top, with the upright axis representing transitive strength, and the radial axis representing the non-transitive dimension, which corresponds to the number of cycles that exist at a particular transitive strength. 
We prove the existence of this geometry for a wide class of real world games by exposing their temporal nature.
Additionally, we show that this unique structure also has consequences for learning -- it clarifies why populations of strategies are necessary for training of agents, and how population size relates to the structure of the game.
Finally, we empirically validate these claims by using a selection of nine real world two-player zero-sum symmetric games, showing 1) the spinning top structure is revealed and can be easily reconstructed by using a new method of Nash clustering to measure the interaction between transitive and cyclical strategy behaviour, and 2) the effect that population size has on the convergence of learning in these games.
\end{abstract}

\section{Introduction}
Game theory has been used as a formal framework to describe and analyse many naturally emerging strategic interactions~\citep{smith_1982,harsanyi1988general,gibbons1992game,Sigmundbook,morrow1994game,Jacksonbook,EasleyKlein}. It is general enough to describe very complex interactions between agents, including classic real world games like Tic-Tac-Toe, Chess, Go, and modern computer-based games like Quake, DOTA and StarCraft II. Simultaneously, game theory formalisms apply to abstract games that are not necessarily interesting for humans to play, but were created for different purposes.
In this paper we ask the following question: Is there a common structure underlying the games that humans find interesting and engaging?

Why is it important to understand the geometry of real world games? Games have been used as benchmarks for the development of artificial intelligence for decades, starting with Shannon's interest in Chess~\citep{shannon1950xxii}, through to the first reinforcement learning success in Backgammon~\citep{tesauro1995temporal}, IBM DeepBlue~\citep{campbell2002deep} developed for Chess, 
and the more recent achievements of AlphaGo~\citep{silver2016mastering} mastering the game of Go, FTW~\citep{jaderberg2019human} for Quake III: Capture the Flag,
AlphaStar~\citep{vinyals2019grandmaster} for StarCraft II, OpenAI Five~\citep{dota2} for DOTA 2, and Pluribus~\citep{brown2019superhuman} for no-limit Texas Hold 'Em Poker. 
We argue that grasping any common structures to these real world games is essential to understand why specific solution methods work, and can additionally provide us with tools to develop AI based on a deeper understanding of the scope and limits of solutions to previously tackled problems.
The analysis of non-transitive behaviour has been critical for algorithm development in general game theoretic settings in the past~\cite{psro,melo,balduzzi2019open}. Therefore a good tool to have would be the formalisation of non-transitive behaviour in real world games and a method of dealing with notion of transitive progress built on top of it. 

We propose the Game of Skill hypothesis (Fig.~\ref{fig:spinningtop}) where strategies exhibit a geometry that resembles a spinning top, where the upright axis represents the transitive strength and the radial axis corresponds to cyclic, non-transitive dynamics. 
We focus on two aspects. 
Firstly, we theoretically and empirically validate whether the Games of Skill geometry materialises in real world games. 
Secondly, we unpack some of the key practical consequences of the hypothesis, in particular investigating the implications for training agents.

\ifodd \ispreprint
\begin{figure}[hbt!]
\centering
\includegraphics[width=0.9\textwidth]{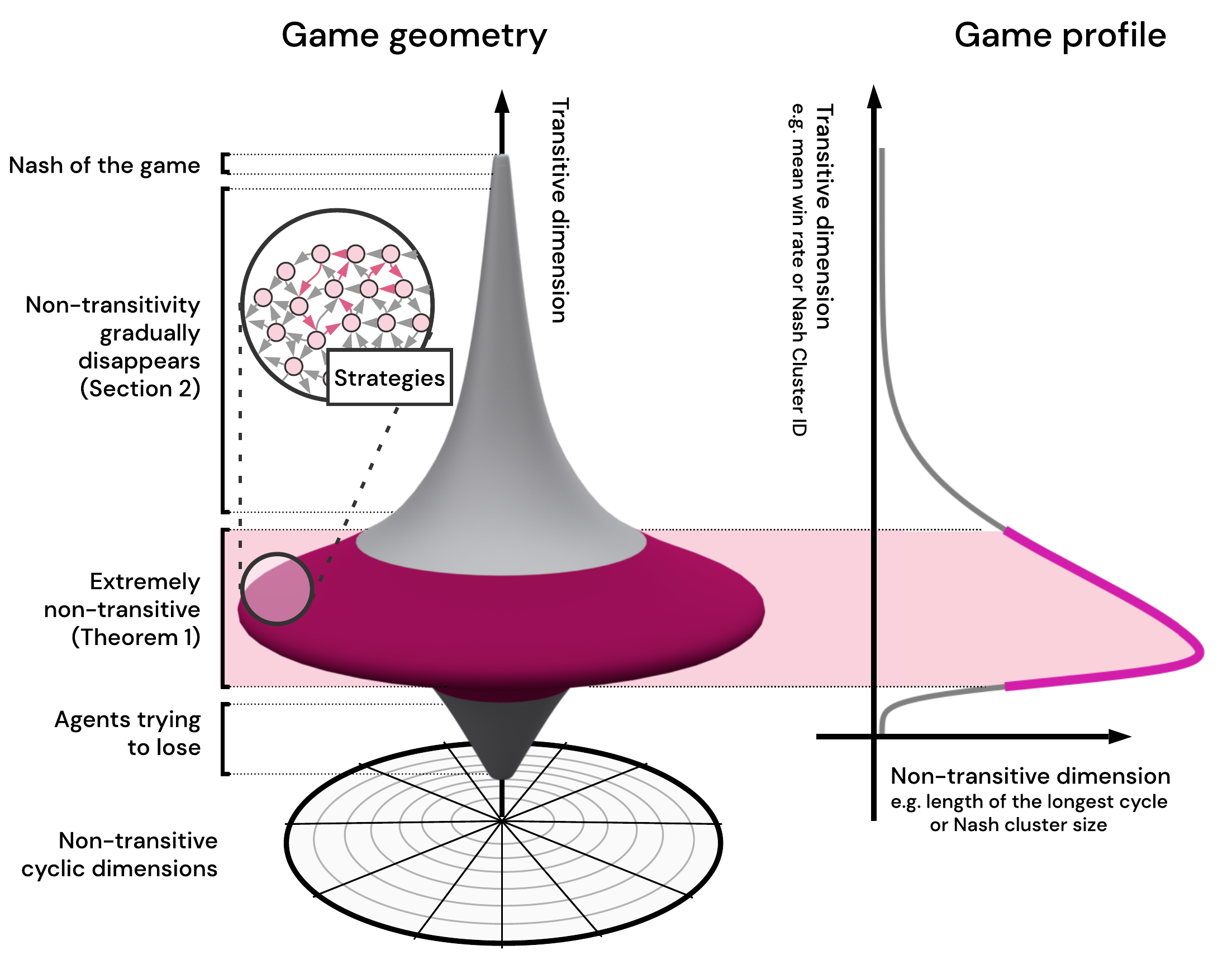}
\caption{
High-level visualisation of the geometry of Games of Skill. It shows a strong transitive dimension, that is accompanied by the highly cyclic dimensions, which gradually diminishes as skill grows towards the Nash Equilibrium (upward), and diminishes as skill evolves towards the worst possible strategies (downward). The simplest example of non-transitive behaviour is a cycle of length 3 that one finds e.g. in the Rock Paper Scissors game.}
\label{fig:spinningtop}
\end{figure}
\else
\begin{wrapfigure}{r}{0.6\textwidth}
\centering
\vspace{-20pt}
\includegraphics[width=0.6\textwidth]{spinning_top.pdf}
\vspace{-20pt}
\caption{
High-level visualisation of the geometry of Games of Skill. It shows a strong transitive dimension, that is accompanied by the highly cyclic dimensions, which gradually diminishes as skill grows towards the Nash Equilibrium (upward), and diminishes as skill evolves towards the worst possible strategies (downward). The simplest example of non-transitive behaviour is a cycle of length 3 that one finds e.g. in the Rock Paper Scissors game.}
\label{fig:spinningtop}
\end{wrapfigure}
\fi
 
Some of the above listed works use multi-agent training techniques that are not guaranteed to improve/converge in all games. 
In fact, there are conceptually simple, yet surprisingly difficult cyclic games that cannot be solved by these techniques~\citep{balduzzi2019open}.
This suggests that a class of real world games might form a strict subset of 2-player symmetric zero-sum games, 
which are often used as a formalism to analyse such games.
The Game of Skill hypothesis provides such a class, and makes specific predictions about how strategies behave. One clear prediction is the existence of tremendously long cycles, which permeate throughout the space of relatively weak strategies in each such game. Theorem~\ref{thm:n_bit_comm} proves the existence of long cycles in a rich class of real world games that includes all the examples above. Additionally, we perform an empirical analysis of nine real world games, and establish that the hypothesised Games of Skill geometry is indeed observed in each of them.

Finally, we analyse the implications of the Game of Skill hypothesis for learning. 
In many of the works tackling real world games~\cite{jaderberg2019human,vinyals2019grandmaster,dota2} some form of population-based training~\cite{pbt,psro} is used, where a collection of agents is gathered and trained against.
We establish theorems connecting population size and diversity with transitive improvement guarantees, underlining the importance of population-based training techniques used in many of the games-related research above, as well as the notion of diversity seeking behaviours. We also confirm these with simple learning experiments over empirical games coming from nine real world games.

In summary, our contributions are three-fold:
i) we define a game class that models real world games, including those studied in recent AI breakthroughs (e.g. Go, StarCraft II, DOTA 2); 
ii) we show both theoretically and empirically that a spinning top geometry can be observed; 
iii) we provide theoretical arguments that elucidate why specific state-of-the-art algorithms lead to consistent improvements in such games, with an outlook on developing new population-based training methods. 
Proofs are provided in Supplementary Materials B, together with details on implementations of empirical experiments (E, G, H), additional data (F), and algorithms used (A, C, D, I, J).

\section{Game of Skill hypothesis}
We argue that real world games have two critical features that make them Games of Skill. The first feature is the notion of progress. Players that regularly practice need to have a sense that they will improve and start beating less experienced players. This is a very natural property to keep people engaged, as there is a notion of skill involved. From a game theory perspective, this translates to a strong transitive component of the underlying game structure. 

A game of pure Rock Paper Scissors (RPS) does not follow this principle and humans essentially never play it in a standalone fashion as a means of measuring strategic skill (without at least knowing the identity of their opponent and having some sense of their opponent's previous strategies or biases).

The second feature is the availability of diverse game styles. A game is interesting if there are many qualitatively different strategies~\citep{deterding2015lens,lazzaro2009we,wang2011game} with their own strengths and weaknesses, whilst on average performing on a similar level in the population. Examples include the various openings in Chess and Go,  which work well against other specific openings, despite not providing a universal advantage against all opponents. It follows  that players with  approximately the same \emph{transitive skill level}, can still have imbalanced win rates against specific individuals within the group, as their game styles will counter one another. This creates interesting dynamics, providing players, especially at lower levels of skill, direct information on where they can improve. Crucially, this richness gradually disappears as players get stronger, so at the highest level of play, the outcome relies mostly on skill and less on game style. 
From a game theory perspective, this translates to non-transitive components that rapidly decrease in magnitude relative to the transitive component as skill improves. 

These two features combined would lead to a cone-like shape of the game geometry, with a wide, highly cyclic base, and a narrow top of highly skilled strategies. However, while players usually play the game to win, the strategy space includes many strategies whose goal is to lose. While there is often an asymmetry between seeking wins and losses (it is often easier to lose than it is to win), the overall geometry will be analogous - with very few strategies that lose against every other strategy, thus creating a peaky shape at the bottom of our hypothesised geometry. 
This leads to a \emph{spinning top} (Figure~\ref{fig:spinningtop}) -- a geometry, where, as we travel across the transitive dimension, the non-transitivity first rapidly increases, and then, after reaching a potentially very large quantity (more formally detailed later), quickly reduces as we approach the strongest strategies.
We refer to games that exhibit such underlying geometry as \emph{Games of Skill}.

\section{Preliminaries}
We first establish preliminaries related to game theory and assumptions made herein. 
We refer to the options, or actions, available to any player of the game as a \emph{strategy}, in the game-theoretic sense.
Moreover, we focus on finite normal-form games (i.e. wherein the outcomes of a game are represented as a payoff tensor), unless otherwise stated.

We use $\policyset$ to denote the {\color{\colorpolicyset} set of all strategies} in a given game, with $\policy_i \in \policyset$ denoting a single {\color{\colorpolicy} pure strategy}. We further focus on symmetric, deterministic, zero sum games, where {\color{\colorpayoff}the payoff (outcome of a game)} is denoted by $\fpayoff(\policy_i, \policy_j) = -\fpayoff(\policy_j, \policy_i) \in [-1, 1]$. 
We say that $\policy_i$ beats $\policy_j$ when $\fpayoff(\policy_i, \policy_j) > 0$, draws when $\fpayoff(\policy_i, \policy_j)=0$ and loses otherwise. 
For games which are not fully symmetric (e.g. all turn based games) we symmetrise them by considering a game we play once as player 1 and once as player 2.
Many games we mention have an underlying time-dependent structure (e.g. chess); thus, it might be more natural to think about them in the so-called extensive-form, wherein player decision-points are expressed in a temporal manner. 
To simplify our analysis, we conduct our analysis by casting all such games to the normal-form, though we still exploit some of the time-dependent characteristics.
\ifodd\ispreprint
\begin{figure}[htb]
    \centering
    \includegraphics[width=1\textwidth]{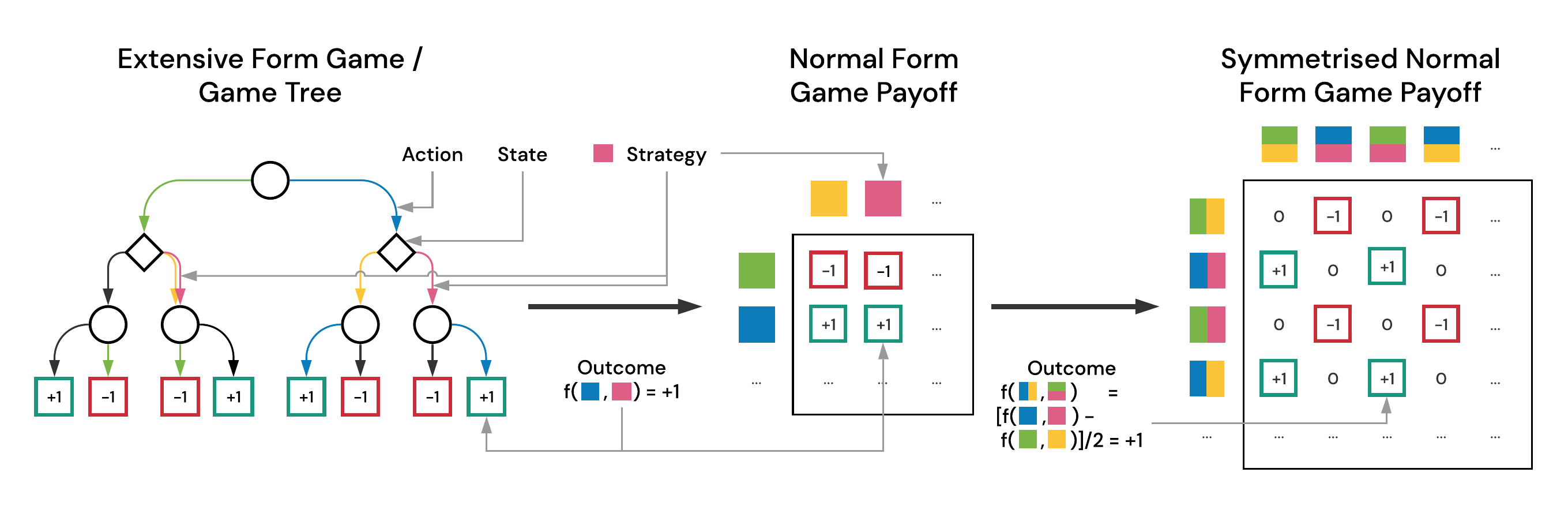}
    \caption{Left -- extensive form/game tree representation of a simple 3-step game, where in each state a player can choose one of two actions, and after exactly 3 moves one of the players wins. Player 1 takes actions in circle nodes, and player 2 in diamond nodes. Outcomes are presented from the perspective of player 1. Middle -- a partial normal form representation of this game, presenting outcomes for 4 strategies, colour coded on the graph representation. Right -- a symmetrised version, where two colours denote which strategy one follows as player 1 and which as player 2.}
    \label{fig:prelim}
    \label{fig:prelim-full}
\end{figure}
\else
\begin{wrapfigure}{r}{0.55\textwidth}
    \centering
    \vspace{-10pt}
    \includegraphics[width=0.5\textwidth]{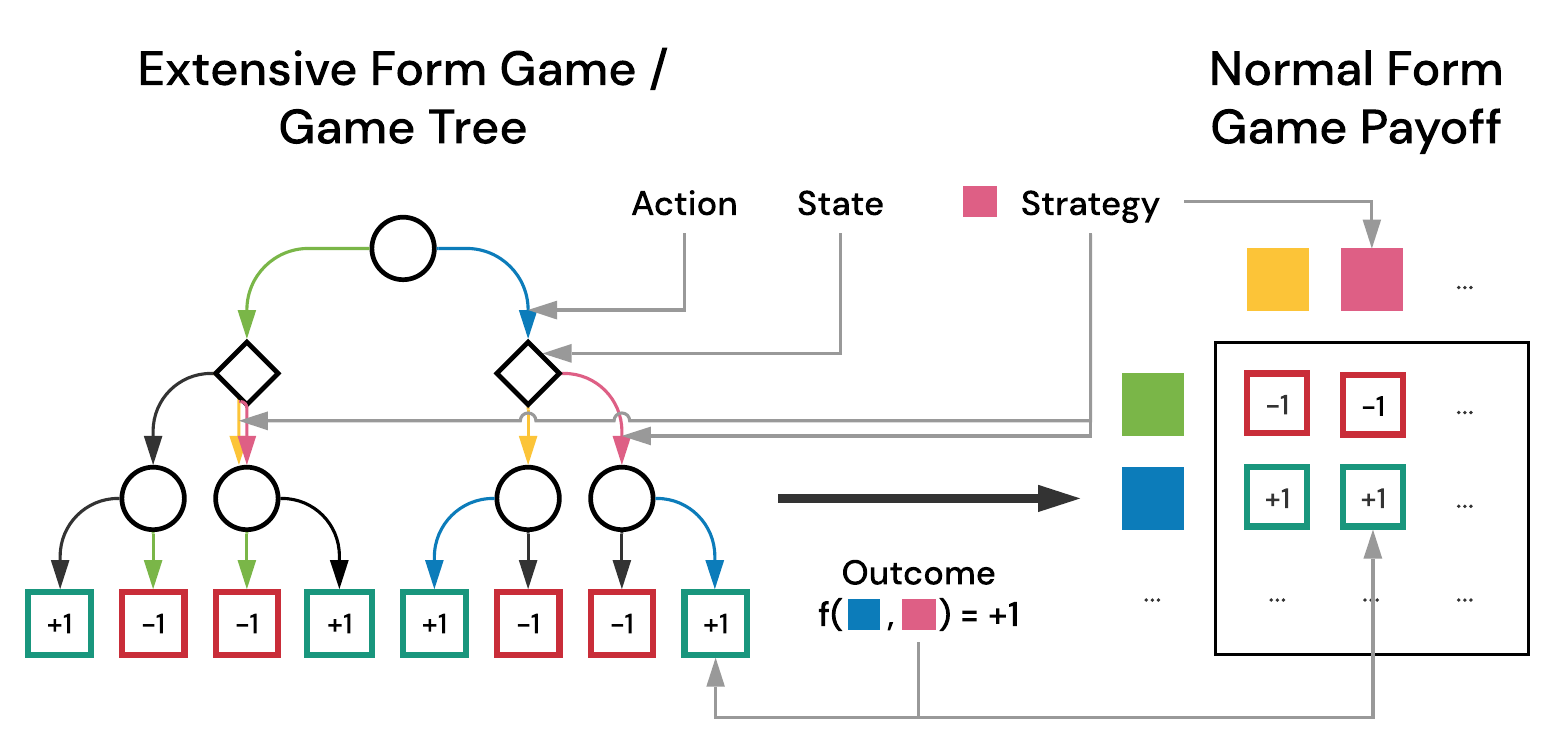}
    \vspace{-5pt}
    \caption{Left -- extensive form/game tree representation of a simple 3-step game, where in each state a player can choose one of two actions, and after exactly 3 moves one of the players wins. Player 1 takes actions in circle nodes, and player 2 in diamond nodes. Outcomes are presented from the perspective of player 1. Right -- a partial normal form representation of this game, presenting outcomes for 4 strategies, colour coded on the graph representation. See Appendix for an extended version of this visual including game symmetrisation.}
    \label{fig:prelim}
\end{wrapfigure}
\fi
Consequently, when we refer to a specific game (e.g. Tic-Tac-Toe), we also analyse the rules of the game itself, which might provide additional properties and insights into the geometry of the payoffs $\fpayoff$. 
In such situations, we explicitly mention that the property/insight comes from game \emph{rules} rather than its payoff structure $\fpayoff$. 
This is somewhat different from a typical game theoretical analysis (for normal form games) that might equate game and $\fpayoff$. 
We use a standard tree representation of temporally extended games, where a node represents a state of the game (e.g. the board at any given time in the game of Tic-Tac-Toe), and edges represent what is the next game state when the player takes a specific action (e.g. spaces where a player can mark their $\times$ or $\circ$). The node is called terminal, when it is an end of the game and it provides an outcome $\fpayoff$. In this view a \emph{strategy} is a deterministic mapping from states to actions, and an outcome between two strategies is simply the outcome of the terminal state they reach when they play against each other.
Figure~\ref{fig:prelim} visualises these views on an exemplary three step game.

\ifodd\ispreprint
\else
\begin{toappendix}
\begin{figure}[htb]
    \centering
    \includegraphics[width=1\textwidth]{preliminaries.pdf}
    \caption{Left -- extensive form/game tree representation of a simple 3-step game, where in each state a player can choose one of two actions, and after exactly 3 moves one of the players wins. Player 1 takes actions in circle nodes, and player 2 in diamond nodes. Outcomes are presented from the perspective of player 1. Middle -- a partial normal form representation of this game, presenting outcomes for 4 strategies, colour coded on the graph representation. Right -- a symmetrised version, where two colours denote which strategy one follows as player 1 and which as player 2.}
    \label{fig:prelim-full}
\end{figure}
\end{toappendix}
\fi

We call a game \emph{monotonic} when $\fpayoff(\policy_i, \policy_j) > 0$ and $\fpayoff(\policy_j, \policy_k) > 0$ implies $\fpayoff(\policy_i, \policy_k) > 0$. In other words, the relation of one strategy beating another is \emph{transitive} in the set theory sense. We say that a set of strategies $\{\policy_i\}_{i=1}^l$ forms a cycle of length $l$ when for each $i>1$ we have $\fpayoff(\policy_{i+1}, \policy_i)>0$ and $\fpayoff(\policy_1, \policy_l) > 0$. For example, in the game of Rock Paper Scissors we have $\fpayoff(\policy_\mathrm{r}, \policy_\mathrm{s}) = \fpayoff(\policy_\mathrm{s}, \policy_\mathrm{p}) = \fpayoff(\policy_\mathrm{p}, \policy_\mathrm{r}) = 1$. 
There are various ways in which one could define a decomposition of a given game into the \emph{transitive} and \emph{non-transitive} components~\cite{balduzzi2019open}. In this paper, we introduce Nash clustering, where the transitive component becomes an index of it, and non-transitivity corresponds to the size of this cluster. We do not claim that this is the only nor the best way of thinking about this phenomena, but we found it to have valuable mathematical properties.


The manner in which we study the geometry of games in this paper is motivated by the structural properties that AI practitioners have exploited to build competent agents for real world games~\cite{vinyals2019grandmaster,silver2016mastering,dota2}, using reinforcement learning (RL).
Specifically, consider an \emph{empirical game-theoretic} outlook on training of policies in a game (e.g. Tic-Tac-Toe), where each trained policy (e.g. neural network) for a player is considered as a strategy of the empirical game.
In other words, an empirical game is a normal-form game wherein AI policies are synonymous with strategies.
Each of these policies, when deployed on the true underlying game, yields an outcome (e.g. win/loss) captured by the payoff in the empirical game.
Thus, in each step of training, the underlying RL algorithm produces an approximate best response in the actual underlying (multistep, extensive form) game;
this approximate best response is then added to the set of policies (strategies) in the empirical game, iteratively expanding it.

This AI training process is also often hierarchical -- there is some form of multi-agent scheduling process that selects a set of agents to be beaten at a given iteration (e.g. playing against a previous version of an agent in self-play~\cite{silver2016mastering}, or against some distribution of agents generated in the past~\cite{vinyals2019grandmaster}), and the underlying RL algorithm used for training new policies performs optimisation to find an agent that satisfies this constraint.
There is a risk that the RL algorithm finds very weak strategies that satisfy the constraint (e.g. strategies that are highly exploitable).
Issues like this have been observed in various large-scale projects (e.g. exploits that human players found in the Open AI Five~\cite{dota2} or exploiters in League Training of AlphaStar~\cite{vinyals2019grandmaster}).
This exemplifies some of the challenges of creating AI agents, which are not the same that humans face when they play a specific game.
Given these insights, we argue that algorithms can be disproportionately affected by the existence of various non-transitive geometries, in contrast to humans.

\section{Real world games are complex}
The spinning top hypothesis implies that at some relatively low level of transitive strength, one should expect very long cycles in any Game of Skill. 
We now prove that, in a large class of games (ranging from board games such as Go and Chess to modern computer games such as DOTA and StarCraft), one can find tremendously long cycles, as well as any other non-transitive geometries.

We first introduce the notion of \emph{\color{\colorn}$n$-bit communicative} games, which provide a mechanism for lower bounding the number of cyclic strategies. 
For a given game with payoff $\fpayoff$, we define its win-draw-loss version with the same rules and payoffs $\fpayoff^\dagger = \textrm{sign} \circ \fpayoff$, which simply removes the score value, and collapses all wins, draws, and losses onto +1, 0, and -1 respectively. 
Importantly, this transformation does not affect winning, nor the notion of cycles (though could, for example, change Nash equilibria).

\begin{dfn}
Consider the extensive form view of the win-draw-loss version of any underlying game; the underlying game is called {\color{\colorn}$n$-bit communicative} if each player can transmit $\nbit \in \mathbb{R}_+$ bits of information to the other player before reaching the node whereafter at least one of the outcomes `win' or `loss' is not attainable.
\end{dfn}
For example, the game in Figure~\ref{fig:prelim} is {\color{\colorn}1}-bit communicative, as each player can take one out of two actions before their actions would predetermine the outcome.
We next show that as games become more communicative, the set of strategies that form non-transitive interactions grows exponentially.
\begin{thmrep}\label{thm:n_bit_comm}
For every game that is at least $\nbit$-bit communicative, and every antisymmetric win-loss payoff matrix $\payoff \in \{-1, 0,1\}^{\lfloor 2^\nbit \rfloor \times \lfloor 2^\nbit \rfloor}$, there exists a set of $\lfloor 2^\nbit \rfloor$ pure strategies $\{\policy_1, ..., \policy_{\lfloor 2^\nbit \rfloor}\} \subset \policyset$ such that $\payoff_{ij} = \fpayoff^\dagger(\policy_i, \policy_j)$, and $\lfloor x \rfloor = \max_{a \in \mathbb{N}} a \leq x$.
\end{thmrep}
\begin{appendixproof}
Let us assume we are given some $\payoff_{ij}$. We define corresponding strategies $\policy_i$ such that each starts by transmitting its ID as a binary vector using $\textbf{n}$ bits. Afterwards, strategy $\policy_i$ reads out $\payoff_{ij}$ based on its own id, as well as the decoded ID of an opponent $\policy_j$, and since we assumed each win-draw-loss outcome can still be reached in a game tree, players then play to win/draw or lose, depending on the value of $\payoff_{ij}$. We choose $\policy_i$ and $\policy_j$ to follow the first strategy in lexicographic ordering (to deal with partially observable/concurrent move games) over sequences of actions that leads to $\payoff_{ij}$ to guarantee the outcome.
Ordering over actions is arbitrary and fixed.
Since identities are transmitted using binary codes, there are $\lfloor 2^\nbit \rfloor$ possible ones. 
\end{appendixproof}

In particular, this means that if we pick $\payoff$ to be cyclic -- where for each $i < \lfloor 2^\nbit \rfloor$ we have $\payoff_{ij} = 1$ for $j<i$, $\payoff_{ji} = -1$ and $\payoff_{ii}=0$, and for the last strategy we do the same, apart from making it lose to strategy 1, by putting $\payoff_{\lfloor 2^\nbit \rfloor1}=-1$ -- we obtain a constructive proof of a cycle of length $\lfloor 2^\nbit \rfloor$, since $\policy_1$ beats $\policy_{\lfloor 2^\nbit \rfloor}$, $\policy_{\lfloor 2^\nbit \rfloor}$ beats $\policy_{\lfloor 2^\nbit \rfloor-1}$, $\policy_{\lfloor 2^\nbit \rfloor-1}$ beats $\policy_{\lfloor 2^\nbit \rfloor-2}$, ..., $\policy_{2}$ beats $\policy_1$.
In practise, the longest cycles can be much longer  (see the example of the Parity Game of Skill in the Supplementary Materials) and thus the above result should be treated as a lower bound. 

Note, that strategies composing these long cycles will be very weak in terms of their transitive performance, but of course not as weak as strategies that actively seek to loose, and thus in the hypothesised geometry they would occupy the thick, middle level of the spinning top. 
Since such strategies do not particularly target winning or losing, they are unlikely to be executed by a human playing a game.
Despite this, we use them to exemplify the most extreme part of the underlying geometry, and given that in both the extremes of very strong and very weak policies we expect non-transitivities to be much smaller than that, we hypothesise that they behave approximately monotonically in both these directions. 

We provide an efficient algorithm to compute $\nbit$ by traversing the game tree (linear in number of transitions between states) in Supplementary Materials together with derivation of its recursive formulation.
We found that Tic-Tac-Toe is ${\color{\colorn}5.58}$-bit communicative (which means that every payoff of size $47 \times 47$ is realised by some strategies).
Additionally, all 1-step games (e.g. RPS) are ${\color{\colorn}0}$-bit communicative, as all actions immediately prescribe the outcome without the ability to communicate any information. 
For games where state space is too large to be traversed, we can consider a heuristic choice of
a subset of actions allowed in each state
thus providing a lower bound on $\nbit$, e.g. in Go we can play stones on one half of the board, and show that $\nbit \geq 1000$.

\begin{proprep}
The game of Go is at least ${\color{\colorn}1000}$-bit communicative and contains a cycle of length at least $2^{1000}$.
\end{proprep}
\begin{appendixproof}
Since Go has a \emph{resign} action, one can use the entire state space for information encoding, whilst still being able to reach both winning and losing outcomes. The game is played on a 19$\times$19 board -- if we split it in half we get 180 places to put stones per side, such that the middle point is still empty, and thus any placement of players stones on their half is legal and no stones die. 
These 180 fields give each player the ability to transfer $\sum_{i=180}^{1} \log_2(i) = \log_2(180!) \approx 1000$ bits.
and according to Theorem~\ref{thm:n_bit_comm} we thus have a cycle of length $2^{1000} > 10^{100}$. Figure~\ref{fig:go-n} provides visualisation of this construction.
\end{appendixproof}

\begin{proprep}
Modern games, such as StarCraft, DOTA or Quake, when limited to 10 minutes play, are at least ${\color{\colorn}36000}$-bit communicative.
\end{proprep}
\begin{appendixproof}
With modern games running at 60Hz, as long as agents can ``meet'' in some place, and execute 60 actions per second that does not change their visibility (such as tiny rotations), they can transmit $60 \cdot 60 \cdot 10=36000$ bits of information per 10 minute encounter. Note, that this is very loose lower bound, as we are only transmitting one bit of information per action, while this could be significantly enriched, if we allow for use of multiple actions (such as jumping, moving multiple units etc.).
\end{appendixproof}
The above analysis shows that real world games have an extraordinarily complex structure, which is not commonly analysed in classical game theory. The sequential, multistep aspect of these games makes a substantial difference, as even though one could simply view each of them in a normal form way~\citep{myerson2013game}, this would hide the true structure exposed via our analysis.

Naturally, the above does not prove that real world games follow the Games of Skill geometry.
To validate the merit of this hypothesis, however, we simply follow the well-established path of proving hypothetical models in natural sciences (e.g. physics). 
Notably, the rich non-transitive structure (located somewhere in the middle of the transitive dimension) exposed by this analysis is a key property that the hypothesised Game of Skill geometry would imply. 
More concretely, in Section~\ref{sec:eval} we conduct empirical game theory-based analysis~\citep{TuylsPLHELSG20} of a wide range of real world games to show that the hypothesised spinning top geometry can, indeed, be observed.

\section{Layered game geometry}
\label{sec:layered}
\label{sec:nash}
The practical consequences of huge sets of non-transitive strategies are two-fold. First, building naive multi-agent training regimes, that try to deal with non-transitivity by asking agents to form a cycle (e.g. by losing to some opponents), is likely to fail -- there are just too many ways in which one can lose without providing any transitive improvement for other agents trained against it. 
Second, there exists a shared geometry and structure across many games, that we should exploit when designing multi-agent training algorithms. In particular, we show how these properties justify some of the recent training techniques involving population-level play and the League Training used in Vinyals et al. \citep{vinyals2019grandmaster}.
In this section, we investigate the implications of such a game geometry on the training of agents, starting with a simplified variant that enables building of intuitions and algorithmic insights.
\begin{dfn}[$k$-layered finite Game of Skill]
We say that a game is a $k$-layered finite Game of Skill if the set of strategies $\policyset$ can be factorised into $k$ {\color{\colorcluster} layers} $\layer_i$ such that $\bigcup_i \layer_i = \policyset$, $\forall_{ i \neq j} \layer_i \cap \layer_j = \emptyset$ and layers are fully transitive in the sense that $\forall_{i<j,\policy_i \in \layer_i, \policy_j \in \layer_j} \fpayoff(\policy_i, \policy_j) > 0$ and 
there exists $z \in \mathbb{N}$ such that for each $i<z$ we have
$|\layer_i| \leq |\layer_{i+1}|$ and $|\layer_i| \geq |\layer_{i+1}|$ for $i \geq z$.
\end{dfn}
Intuitively, all the non-transitive interaction take place within each layer $\layer_i$, whilst the skill (or transitive) component of the game corresponds to a layer ID.
For every finite game, there exists $k\geq 1$ for which it is a $k$-layered game (though when $k=1$ this structure is not useful). Moreover, every monotonic game has as many layers as there are strategies in the game.
Even the simplest non-transitive structure can be challenging for many training algorithms used in practise~\citep{dota2,silver2016mastering,jaderberg2019human}, such as naive self-play~\cite{balduzzi2019open}.
However, a simple form of fictitious play with a hard limit on population size will converge independently of the oracle used (the oracle being the underlying algorithm that returns a new policy that satisfies a given improvement criterion):
\begin{proprep}
    Fixed-memory size fictitious play initialised with {\color{\colorpopulation} population of strategies} $\populationnot \subset \policyset$ where at iteration $t$ one replaces some strategy in $\populationprev$ with a new strategy $\policy$ such that $\forall_{\policy_i \in \populationprev} \fpayoff(\policy, \policy_i) > 0$ converges in layered Games of Skill, if the population is not smaller than the size of the lowest layer occupied by at least one strategy in the population $|\populationnot| \geq |\layer_{\arg\min_k :  \populationnot \cap \layer_k \neq \emptyset}|$ and at least one strategy is above $z$. If all strategies are below $z$, then required size is that of $|\layer_z|$.
\end{proprep}
\begin{appendixproof}
Let's assume at least one strategy is above $z$.
We will prove, that there will be at most $|\population|-1$ consecutive iterations where algorithm will not improve transitively (defined as a new strategy being part of $\layer_i$ where $i$ is smaller than the lowest number of all $\layer_j$ that have non empty intersections with $\population$).
Since we require the new strategy $\policy_{t+1}$ added at time $t+1$ to beat all previous strategies, it has to occupy at least a level, that is occupied by the strongest strategy in $\population$. Let's denote this level by $\layer_k$, then $\policy_{t+1}$ improves transitively, meaning that there exists $i < k$ such that $\policy_{t+1} \in \layer_i$, or it belongs to $\layer_k$ itself. Since by construction $|\layer_k| \leq |\population|$, this can happen at most $|\population|-1$ times, as each strategy in $\population \cap \layer_k$ needs to be beaten by $\policy_{t+1}$ and $|\population \cap \layer_k| < |\population|$.
By the analogous argument, if all the strategies are below $\layer_z$, one can have at most $|\max_i |\layer_i| -1$ consecutive iterations without transitive improvement.
\end{appendixproof}
Intuitively, to guarantee transitive improvements over time, it is important to cover all possible game styles.
This proposition also leads to a known result of needing just one strategy in the population (e.g. self-play) to keep improving in monotonic games~\citep{balduzzi2019open}.
Finally, it also shows an important intuition related to how modern AI systems are built -- the complexity of the non-transitivity discovery/handling methodology decreases as the overall transitive strength of the population grows. Various agent priors (e.g. search, architectural choices for parametric models such as neural networks, smart initialisation such as imitation learning etc.) will initialise in higher parts of the spinning top, and also restrict the set of representable strategies to the transitively stronger ones.
This means that there exists a form of balance between priors 
one builds into an AI system and the amount of required multi-agent learning complexity required (see Figure~\ref{fig:balance} for a comparison of various recent state of the art AI systems).
\ifodd \ispreprint
\else
\begin{toappendix}
\fi
\begin{figure*}[htb]
\centering
\includegraphics[width=0.97\textwidth]{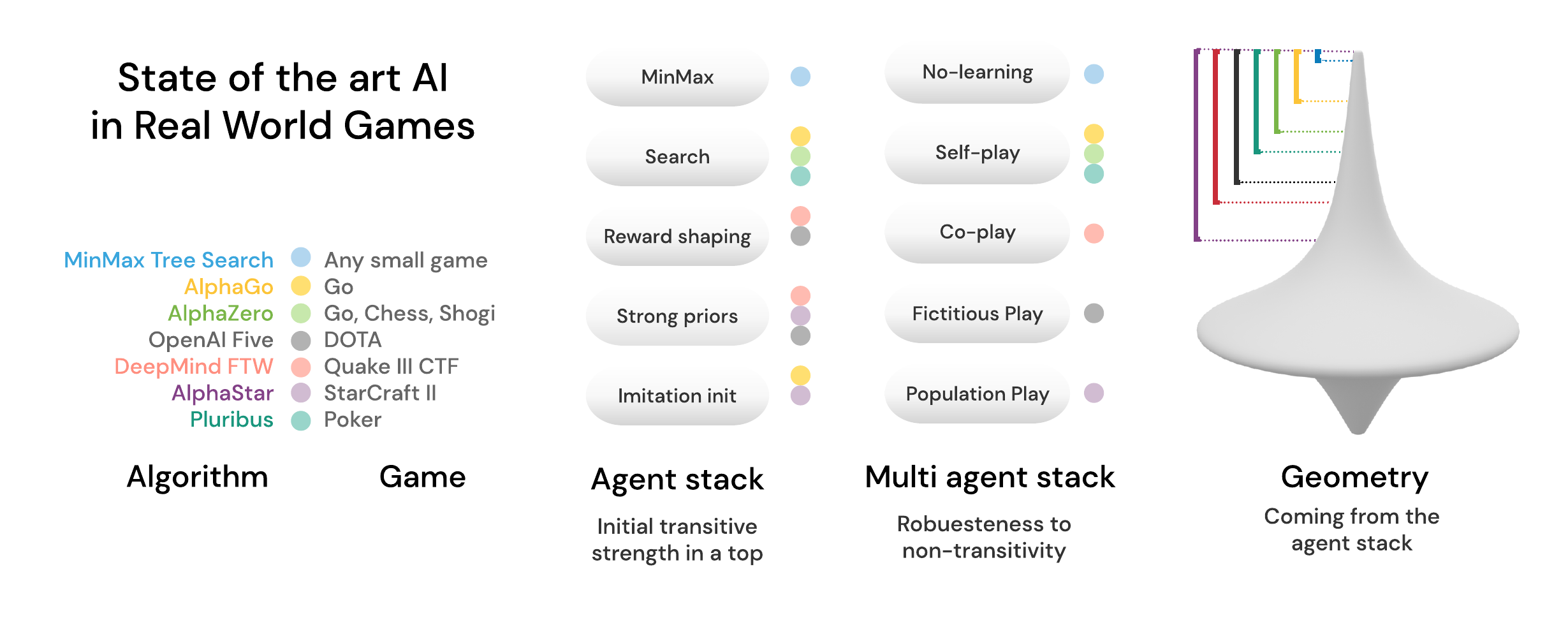}
\caption{
    Visualisation of various state of the art approaches for solving real world games, with respect to the multi-agent algorithm and agent modules used (on the left). Under the assumption that these projects led to the approximately best agents possible, and that the Game of Skill hypothesis is true for these games, we can predict what part of the spinning top each of them had to explore (represented as intervals on the right). This comes from the complexity of the multi-agent algorithm (the method of dealing with non-transitivity) that was employed -- the more complex the algorithm, the larger the region of the top that was likely represented by the strategies using the specific agent stack. This analysis does not expose which approach is better or worse. Instead, it provides intuition into how the development of training pipelines used in the literature enables simplification of non-transitivity avoidance techniques, as it provides an initial set of strategies high enough in the spinning top.
}
\label{fig:balance}
\end{figure*}
\ifodd \ispreprint
\else
\end{toappendix}
\fi
From a practical perspective, there is no simple way of knowing $|\layer_z|$ without traversing the entire game tree. Consequently, this property is not directly transferable to the design of an efficient algorithm (as if one had access to the full game tree traversal, one could simply use Min-Max to solve the game).
Instead, this analysis provides an intuitive mechanism, explaining why finite-memory fictitious self-play can work well in practice.


In practise, the non-transitive interactions are not ordered in a simple layer structure, where each strategy from one beats each from the other. We can however relax notion of transitive relation which will induce a new cluster structure.
The idea behind this approach, called \emph{Nash clustering}, is to first find the mixed Nash equilibrium of the game payoff $\payoff$ over the set of pure strategies $\policyset$ (we denote the equilibrium for payoff $\payoff$ when restricted only to strategies in $X$ by $\mathrm{Nash}(\payoff|X)$), and form a first cluster by taking all the pure strategies in the support of this mixture. Then, we restrict our game to the remaining strategies, repeating the process until no strategies remain.

\begin{dfn}
{\color{\colorcluster} Nash clustering} $\mathbf{\cluster}$ of the finite zero-sum symmetric game strategy $\policyset$ set by setting for each $i \geq 1$:
$
N_{i+1} = \mathrm{supp}(\mathrm{Nash}(\payoff | \policyset\setminus \bigcup_{j \leq i} N_j))
$
for $N_0=\emptyset$ and $\mathbf{\cluster} = (N_j : j \in \mathbb{N} \wedge N_j \neq \emptyset)$.
\end{dfn}

While there might be many Nash clusterings per game, there exists a unique maximum entropy Nash clustering where at each iteration we select a Nash equlibrium with maximum Shannon entropy, which is guaranteed to be unique~\citep{ortiz2007maximum} due to the convexity of the objective.
The crucial result is that Nash clusters form a monotonic ordering with respect to Relative Population Performance (RPP)~\citep{balduzzi2019open}, which is defined for two sets of agents $\Pi_A, \Pi_B$ with a corresponding Nash equilibrium of the asymmetric game $(p_A,p_B) := \mathrm{Nash}(\payoff_{AB}|(A,B))$ as
$
\mathrm{RPP}(\Pi_A, \Pi_B) = p_A^\mathrm{T} \cdot \payoff_{AB} \cdot p_B.
$

\begin{thmrep} 
Nash clustering satisfies $\mathrm{RPP}(\cluster_i, \cluster_j) \geq 0$ for each $j>i$.
\end{thmrep}
\begin{appendixproof}
By definition for each $A$ and each $B' \subset B$ we have
$
\mathrm{RPP}(A,B') \geq \mathrm{RPP}(A,B),
$
thus for $X_i := \policyset\setminus \bigcup_{k < i} \cluster_k$ and every $j>i$ we have $\cluster_j \subset X_i$ and
\begin{equation}
\begin{aligned}
\mathrm{RPP}(\cluster_i, \cluster_j) &\geq \mathrm{RPP}(\cluster_i, X_i)\\
&= \mathrm{RPP}(\mathrm{supp}(\mathrm{Nash}(\payoff|X_i)), X_i) \\
&= \mathrm{RPP}(X_i, X_i) =0.
\end{aligned}
\end{equation}
\end{appendixproof}
We refer to this notion as a relaxation, since it is not each strategy in one cluster that is better than in the other, but rather the whole cluster is better than the other. In particular, this means that in $k$-layered game, the new clusters are subsets of layers (because Nash equilibrium will never contain a fully dominated strategy).
Next we show that a diverse population that spans an entire cluster guarantees transitive improvement, despite not having access to any weaker policies nor knowledge of covering the cluster.
\begin{thmrep}
If at any point in time, the training population $\population$ includes any full Nash cluster $\cluster_i \subset \population$, then training against $\population$ by finding $\policy$ such that $\forall_{\policy_j \in \population} \fpayoff(\policy, \policy_j) > 0$ guarantees transitive improvement in terms of the Nash clustering $\exists_{k<i} \;\policy \in \cluster_k$.
\end{thmrep}
\begin{appendixproof}
Lets assume that $\exists_{k>i} \policy \in \cluster_k$. This means, that 
\begin{equation}
    \begin{aligned}
    \textrm{RPP}(\cluster_i, \cluster_k) &\leq \max_{\policy_j \in \cluster_i} \fpayoff(\policy_j, \policy)= \max_{\policy_j \in \cluster_i} [-\fpayoff(\policy, \policy_j)] < 0,
    \end{aligned}
\end{equation}
where the last inequality comes from the fact that $\cluster_i \subset \population$ and  $\forall_{\policy_j \in \population} \fpayoff(\policy, \policy_j) > 0$ implies that  $\forall_{\policy_j \in \cluster_i} \fpayoff(\policy, \policy_j) > 0$.
This leads to a contradiction with the Nash clustering and thus $\policy \in \cluster_k$ for some $k \leq i$. Finally $\policy$ cannot belong to $\cluster_i$ itself since $\forall_{\policy_j \in \cluster_i} \fpayoff(\policy, \policy_j) > 0 = \fpayoff(\policy,\policy)$.
\end{appendixproof}

Consequently, to keep improving transitively, it is helpful to seek wide coverage of strategies around the current transitive strength (inside the cluster). This high level idea has been applied in some multi-player games such as soccer~\citep{le2017coordinated} and more recently StarCraft II. AlphaStar~\citep{vinyals2019grandmaster} explicitly attempts to cover the non-transitivities using exploiters, which implicitly try to expand on the current Nash. Interestingly, same principle can be applied to single-player domains and justify seeking diversity of the environments, so that agents need to improve transitively with respect to them. With the Game of Skill geometry one can rely on this required coverage to be smaller over time (as agents get stronger). Thus, forcing the new generation of agents to be the weakest ones that beat the previous one would be sufficient to keep covering cluster after cluster, until reaching the final one.

\begin{table*}[!ht]
\centering
\begin{tabular}{c|c|c}
\includegraphics[width=0.14\textwidth]{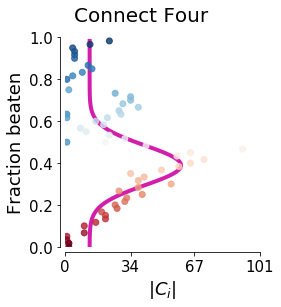}
\includegraphics[width=0.16\textwidth]{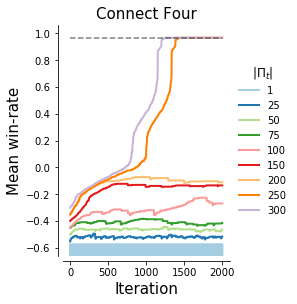}&
\includegraphics[width=0.14\textwidth]{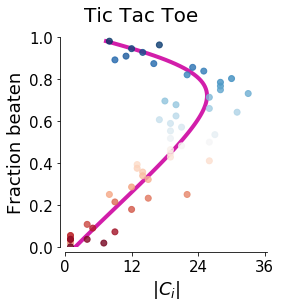}
\includegraphics[width=0.16\textwidth]{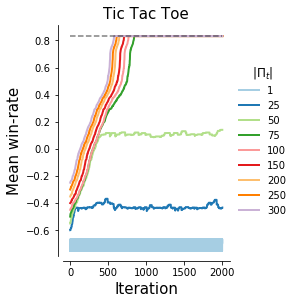}&
\includegraphics[width=0.14\textwidth]{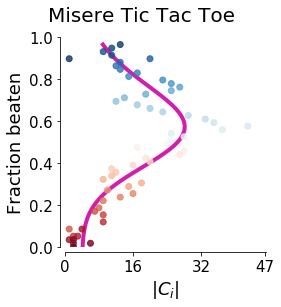}
\includegraphics[width=0.16\textwidth]{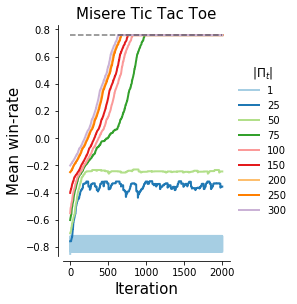}\\ \hline
\includegraphics[width=0.14\textwidth]{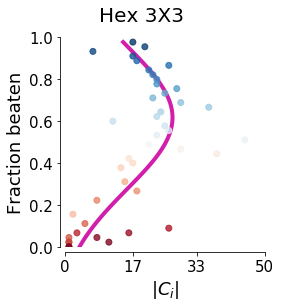}
\includegraphics[width=0.16\textwidth]{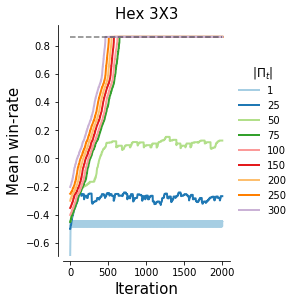}&
\includegraphics[width=0.14\textwidth]{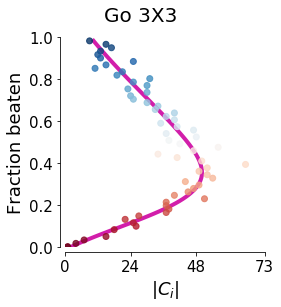}
\includegraphics[width=0.16\textwidth]{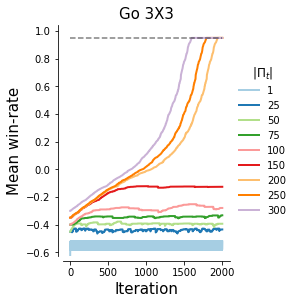}&
\includegraphics[width=0.14\textwidth]{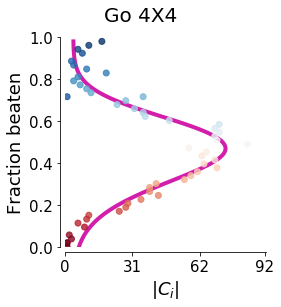}
\includegraphics[width=0.16\textwidth]{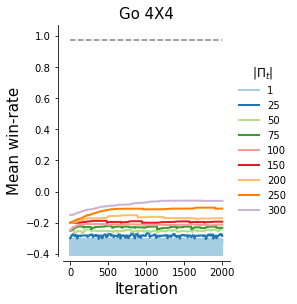}\\ \hline
\includegraphics[width=0.14\textwidth]{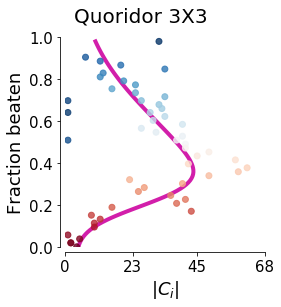}
\includegraphics[width=0.16\textwidth]{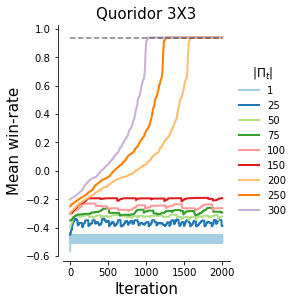}&
\includegraphics[width=0.14\textwidth]{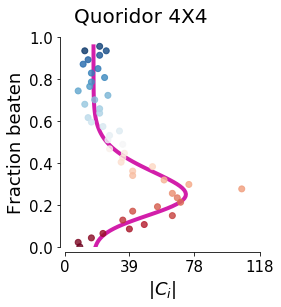}
\includegraphics[width=0.16\textwidth]{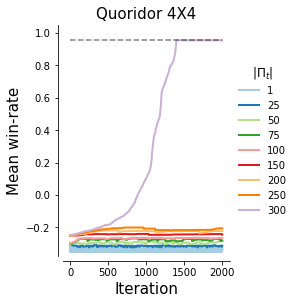}&
\includegraphics[width=0.14\textwidth]{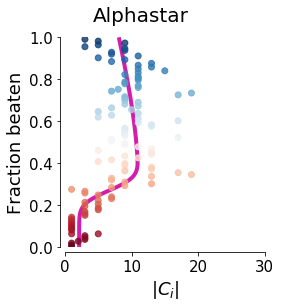}
\includegraphics[width=0.16\textwidth]{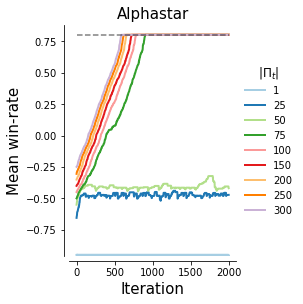}\\ \hline \hline
\includegraphics[width=0.14\textwidth]{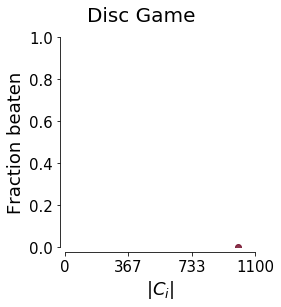}
\includegraphics[width=0.16\textwidth]{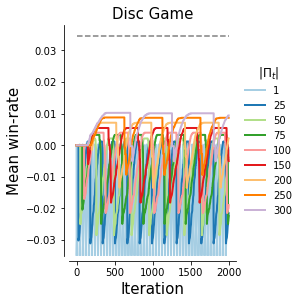} &
\includegraphics[width=0.14\textwidth]{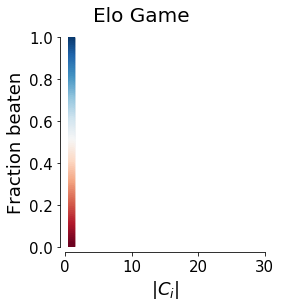}
\includegraphics[width=0.16\textwidth]{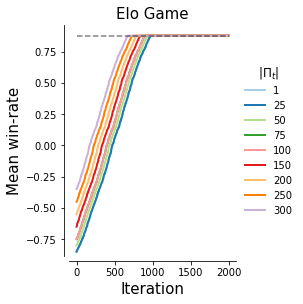} &
\includegraphics[width=0.14\textwidth]{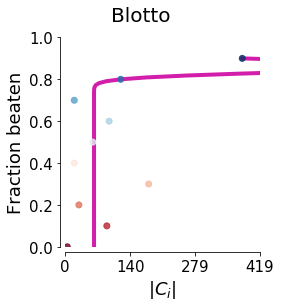}
\includegraphics[width=0.16\textwidth]{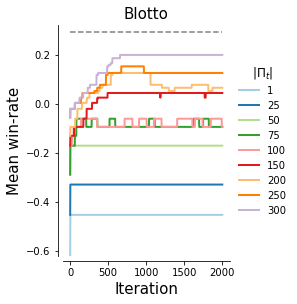} 
\end{tabular}
\caption{(Left of each plot) Game profiles of empirical game geometries, when sampling strategies in various real world games, such as Connect Four, Tic-Tac-Toe and StarCraft II (note that strategies in AlphaStar come from a learning system, and not our sampling strategy, see Supplementary Materials for details and discussion). The first three rows shows clearly the Game of Skill geometry, while the last row shows the geometry for games that are not Games of Skill, and clearly do not follow this geometry. 
The pink curve shows a fitted Skewed Gaussian highlighting the spinning top shape (details in Supplementary Materials).
(Right of each plot) Learning curves in empirical games, using various population sizes, the oldest strategy in the population is replaced with one that beats the whole population on average using an adversarial oracle (returning the weakest strategy satisfying this goal). For Games of Skill there is a phase change of behaviour for most games, where once the population is big enough to deal with the non transitivity, the system converges to the strongest policy. On the other hand, in other games (bottom) such as the Disc game, no population size avoids cycling, and for fully transitive games like the Elo game, even naive self play converges.
}
\label{tab:empirical}
\label{tab:learning}
\end{table*}

\begin{toappendix}
\begin{table*}[!ht]
\centering
\begin{tabular}{c|c|c}
\includegraphics[width=0.31\textwidth]{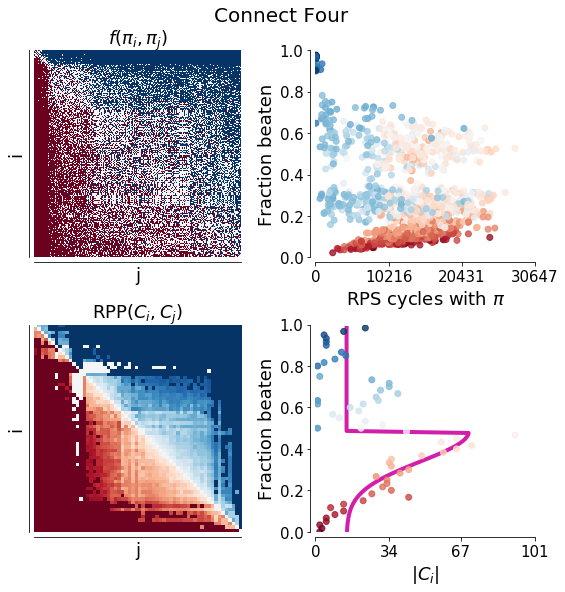} &
\includegraphics[width=0.31\textwidth]{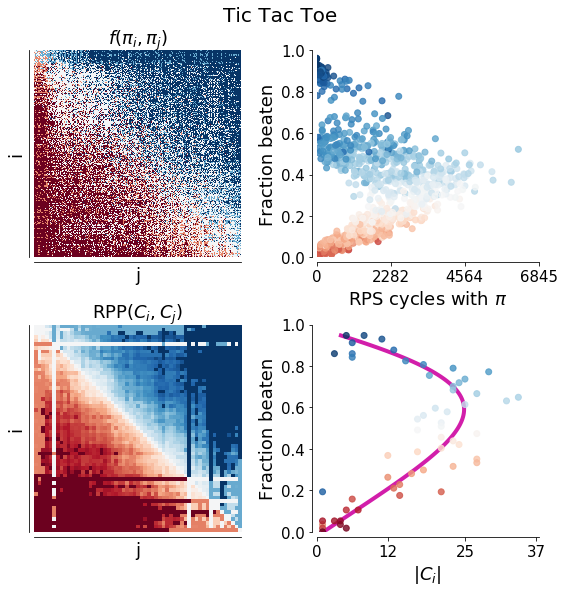} &
\includegraphics[width=0.31\textwidth]{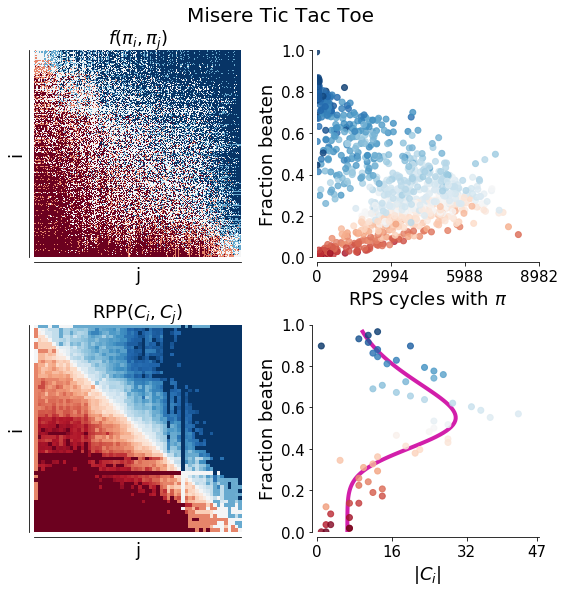}\\ \hline 
\includegraphics[width=0.31\textwidth]{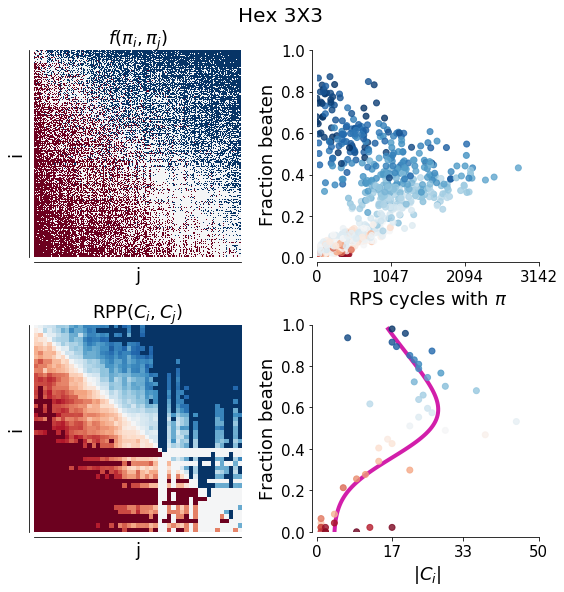} &
\includegraphics[width=0.31\textwidth]{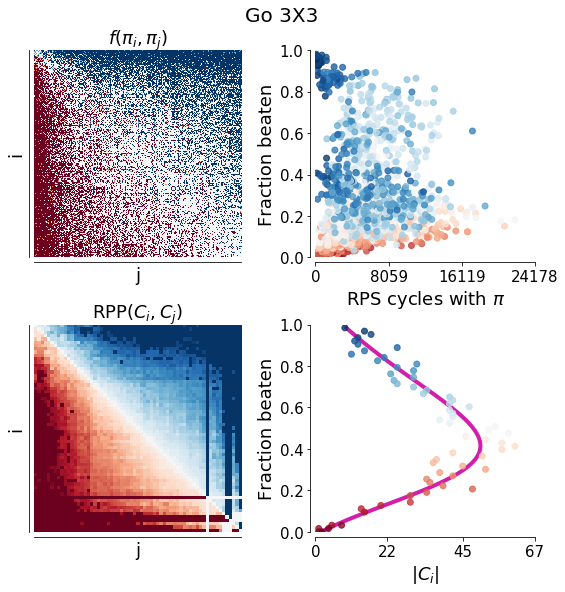} & 
\includegraphics[width=0.31\textwidth]{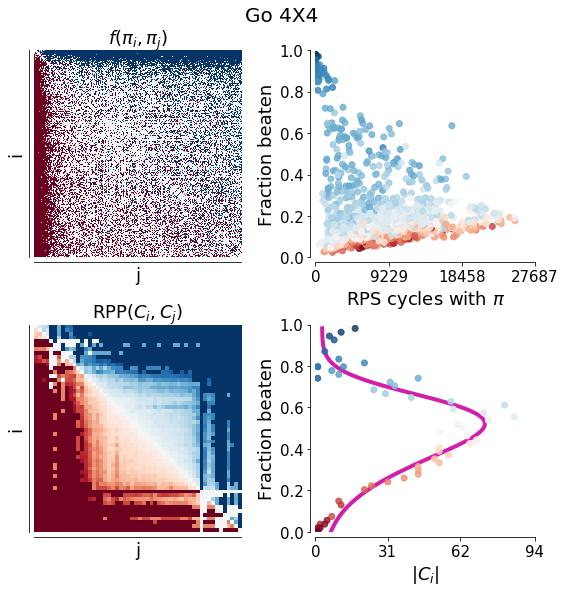} \\ \hline 
\includegraphics[width=0.31\textwidth]{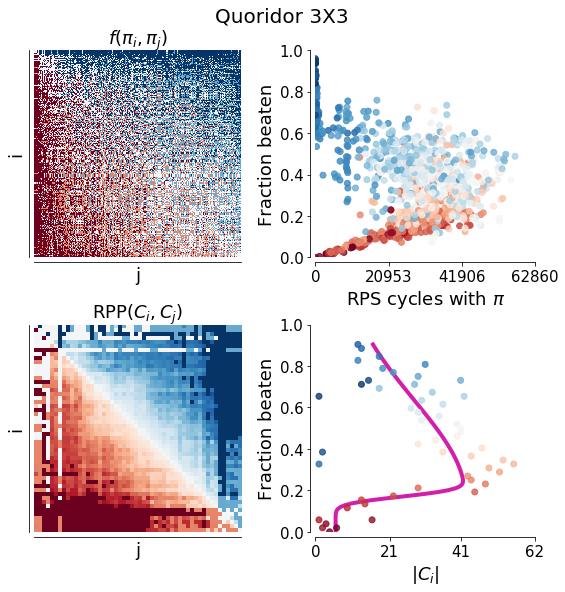} &
\includegraphics[width=0.31\textwidth]{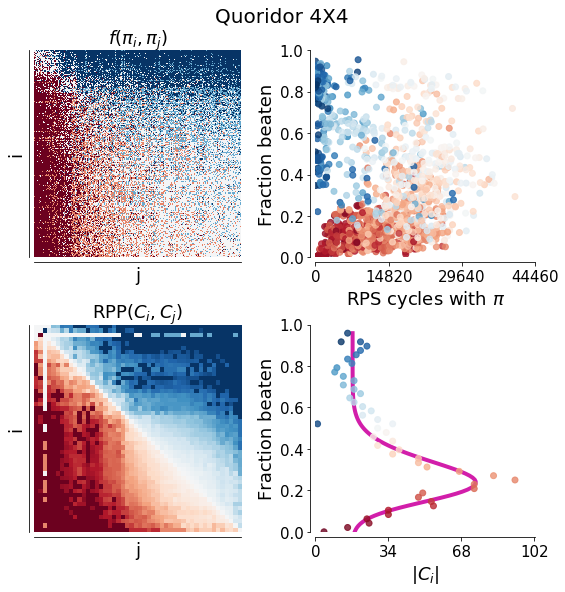} &
\includegraphics[width=0.31\textwidth]{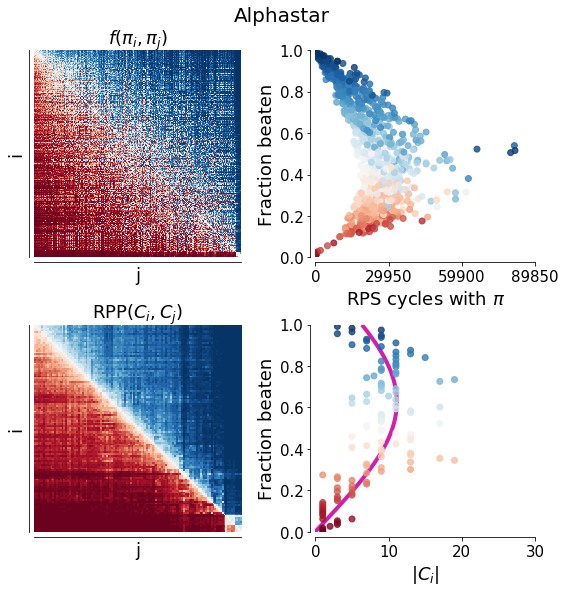} \\ \hline
 \hline
\includegraphics[width=0.31\textwidth]{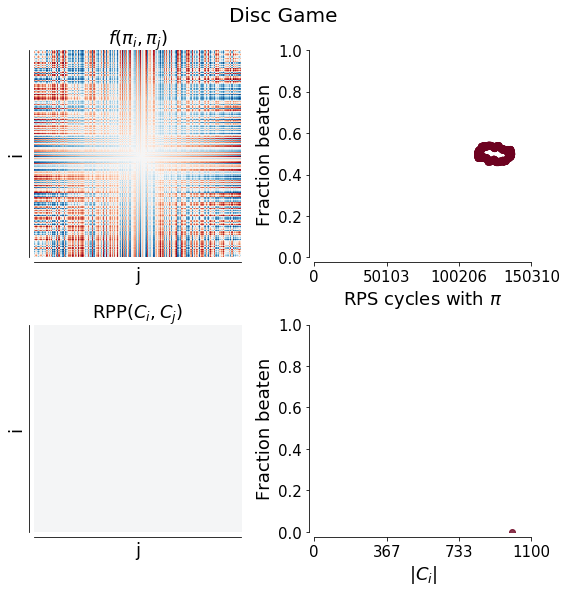}&
\includegraphics[width=0.31\textwidth]{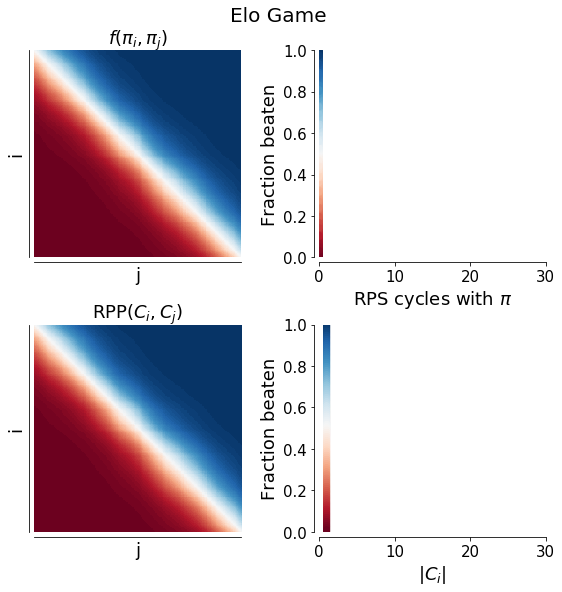} &
\includegraphics[width=0.31\textwidth]{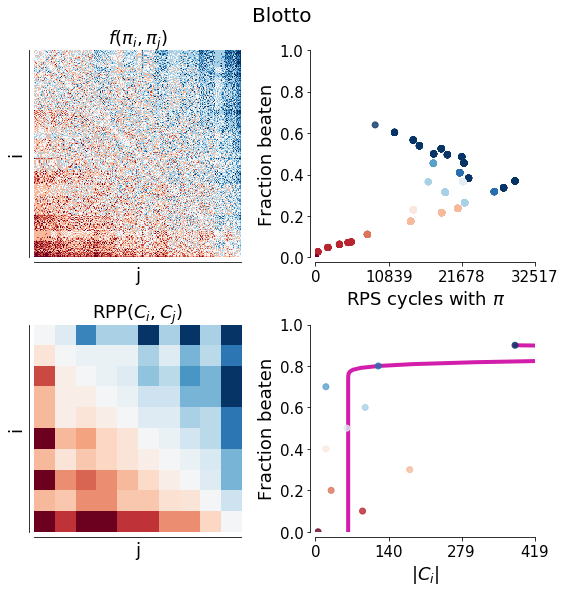} 
\end{tabular}
\caption{Game profiles of empirical game geometries, when sampling strategies in various real world games, such as Connect Four, Tic Tac Toe and even StarCraft II. The first three rows shows clearly the Game of Skill geometry, while the last row shows the geometry for games that are not Games of Skill, and clearly do not follow this geometry. Rows of the payoffs are sorted by mean winrate for easier visual inspection. The pink curve shows a fitted Skewed Gaussian to show the spinning top shape, details provided in Supplementary Materials.
}
\label{tab:empirical_full}
\end{table*}
\end{toappendix}

\section{Empirical validation of Game of Skill hypothesis}
\label{sec:eval}
To empirically validate the spinning top geometry, we consider a selection of two-player zero-sum games available in the OpenSpiel library ~\citep{lanctot2019openspiel}.
Unfortunately, even for the simplest of real world games, the strategy space can be enormous. For example, the number of behaviourally unique pure strategies in Tic-Tac-Toe is larger than $10^{567}$ (see Supplementary Materials).
A full enumeration-based analysis is therefore computationally infeasible. 
Instead, we rely on empirical game-theoretic analysis, an experimental paradigm that relies on simulation and sampling of strategies to construct abstracted counterparts of complex underlying games, which are more amenable for analysis ~\citep{walsh2002analyzing, walsh2003choosing, PhelpsPM04, wellman2006methods, PhelpsCMNPS07, TuylsP07}. 
Specifically, we look for strategy sampling that covers the strategy space as uniformly as possible so that the underlying geometry of the game (as exposed by the empirical counterpart) is minimally biased. 
A simple and intuitive procedure for strategy sampling is as follows. First, apply a tree-search method, in the form of Alpha-Beta~\citep{newell1976computer} and MCTS~\citep{brugmann1993monte} and select a range of parameters that control the transitive strength of these algorithms (depth of search for Alpha-Beta and number of simulations for MCTS) to ensure coverage of transitive dimension. Second, for each such strategy we create multiple instances, with varied random number seed, thus causing them to behave differently.
We additionally include Alpha-Beta agents that actively seek to lose, to ensure discovery of the lower cone of the hypothesised spinning top geometry.
While this procedure does not guarantee uniform sampling of strategies, it at least provides decent coverage of the transitive dimension.
In total, this yields approximately 1000 agents per game.
Finally, following strategy sampling, we form an empirical payoff table with entries evaluating the payoffs of all strategy match-ups, remove all duplicate agents, and use this matrix to approximate the underlying game of interest.

Table~\ref{tab:empirical} summarises the empirical analysis which, for the sake of completeness, includes both Games of Skill and games that are \emph{not} Games of Skill such as the Disc game~\citep{balduzzi2019open}, a purely transitive Elo game, and the Blotto game. 
Overall, all real world games results show the hypothesised spinning top geometry.
More closely inspecting the example of Go (3$\times$3) in Table~\ref{tab:empirical_full} of the Supplementary Materials, we notice that the Nash clusters induced payoff look monotonic, and the sizes of these are maximal around the mid-ranges of transitive strength, and quickly decrease as transitive strength both increases or decreases. At the level of the strongest strategies, non-trivial Nash clusters exist, showing that even in this empirical approximation of the game of Go on a small board, one still needs some diversity of play styles. This is to be expected due to various game symmetries of the game rules. 
Moreover, various games that were created to study game theory (rather than for humans to play) fail to exhibit the hypothesised geometry. In the game of Blotto, for example, the size of Nash clusters keep increasing, as the number of strategies one needs to mix at higher and higher levels of play in this game keeps growing. This is a desired property for the purpose of studying complexity of games, but arguably not so for a game that is simply played for enjoyment. In particular, the game of Blotto requires players to mix uniformly over all possible permutations to be unexploitable (since the game is invariant to permutations), which is difficult for a human player to achieve.

We tested the population size claims of Nash coverage as follows.
First, construct empirical games coming from the sampling of $n$ agents defined above, yielding an approximation of the underlying games. 
Second, define a simple learning algorithm, where we start with $k$ (size of the population) weakest strategies (wrt. mean win-rate) and iteratively replace the oldest one with a strategy $\policy$ that beats the entire population $\population$ on average, meaning that
$
\sum_{\policy' \in \population} \fpayoff(\policy, \policy') > 0.
$
To pick the new strategy, we use a pessimistic oracle that selects the weakest strategy satisfying the win-rate condition. This counters the bias towards sampling stronger strategies, thus yielding a more fair approximation of typical greedy learning methods such as gradient-based methods or reinforcement learning. 

For small population sizes, training does not converge and cycles for all games (Table~\ref{tab:learning}). As the population grows, strength increases but saturates in various suboptimal cycles. However, when the population exceeds a critical size, training converges to the best strategies in almost all experiments. For games that are not real world games we observe quite different behaviour - where, despite growth of population size, cycling keeps occuring (e.g. the Disc game), convergence is guaranteed even with a population of size 1 (e.g. the Elo game, which is monotonic).

\section{Conclusions}
In this paper we have introduced Games of Skill, a class of games that, as motivated both theoretically and empirically, includes many real world games, including Tic-Tac-Toe, Chess, Go and even StarCraft II and DOTA. 
In particular we showed, that $n$-step games have tremendously long cycles, and provided both mathematical and algorithmic methods to estimate this quantity.
We showed, that Games of Skill have a geometry resembling a spinning top, which can be used to reason about their learning dynamics. In particular, our insights provide useful guidance for research into population-based learning techniques building on League training~\citep{vinyals2019grandmaster} and PBT~\citep{jaderberg2019human}, especially when enriched with notions of diversity seeking~\citep{balduzzi2019open}. 
Interestingly, we show that many games from classical game theory are \emph{not} Games of Skill, and as such might provide challenges that are not necessarily relevant to developing AI methods for real world games.
We hope that this work will encourage researchers to study real world games structures, to build better AI techniques that can exploit their unique geometries.

\ifodd \ispreprint
\else
\section*{Broader Impact}
This work focuses on better understanding of mathematical properties of real world games and how they could be used to understand successful AI techniques that were developed in the past.
Since we focus on retrospective analysis of a mathematical phenomenon, on exposing an existing structure, and deepening our understanding of the world, we do not see any direct risks it entails.
Introduced notions and insights could be used to build better, more engaging AI agents for people to play with in real world games (e.g. AIs that grow with the player, matching their strengths and weaknesses).
In a broader spectrum, some of the insights could be used for designing and implementing new games, that humans would fine enjoyable though challenges they pose. In particular it could be a viewed as a model for measuring how much notion of progress the game consists of.
However, we acknowledge that methods enabling improved analysis of games may be used for designing products with potentially negative consequences (e.g., games that are highly addictive) rather than positive (e.g., games that are enjoyable and mentally developing).
\fi

\ifodd \ispreprint
\section*{Acknowledgements}
We would like to thank Alex Cloud for valuable comments on empirical analysis of the AlphaStar experiment, which lead to adding an extended section in the Supplementary Materials.
We are also thankful to authors of OpenSpiel framework for the help provided with setting up the experiments that allowed empirical validation of the hypothesised geometry.
The authors would also like to thank Gema Parreño Piqueras for insights into presentation and visualisations of the paper that allowed us to improve figures as well as the way concepts are presented.
\fi

\bibliographystyle{plainnat}
\bibliography{arefs}

\appendix
\begin{toappendix}
\setcounter{section}{1}

\section{Computing $\nbit$ in $\nbit$-bit communicative games}
Our goal is to be able to encode identity of a pure strategy in actions it is taking, in such a way, that opponent will be able to decode it. 
We focus on fully observable, turn-based games.
Note, that with pure policy, and fully observable game, the only way to sent information to the other player is by taking an action (which is observed). Consequently, if at given state one considers $A$ actions, then choosing one of them we can transmit $\log_2(A)$ bits. 
We will build our argument recursively, by considering subtrees of a game tree. Naturally, a subtree is a tree of some game.
Since the assumption of $n$-bit communicativeness is that we can transmit $n$ bits of information before outcomes become independent, it is easy to note that a subtree for which we cannot find terminal nodes with both outcomes (-1, +1) is 0-bit communicative. Let's remove these nodes from the tree. In the new tree, all the leaf nodes are still 0-bit communicative, as now they are ``one action away'' from making the outcome deterministic. Let's define function $\phi$ per state, that will output how many bits each player can transmit, before the game becomes deterministic, so for each player $j$
$$
\phi_j(s) = 0 \text{ if } s \text{ is a leaf}.
$$
The crucial element is how to now deal with a decision node. Let's use notation $c(s)$ to denote set of all children states, which we assume correspond to taking actions available in this state. If many actions would lead to the same state, we just pretend only one such action exists. From the perspective of player $j$, what we can do, is to select a subset of states that are reachable from $s$. If we do so, we will be able to encode $\log_2 |c(s)|$ bits in this move plus whatever we can encode in the future, which is simply $\min_{s' \in c(s)} \phi_j(s')$ as we need to guarantee being able to transmit this number of bits no matter which path is taken.
$$
\phi_j(s) = \max_{I \subset c(s)} \{ \log_2 |c(s)| + \min_{s' \in c(s)} \phi_j(s') \}
$$
However, our argument is symmetric, meaning that we need to not only transmit bits as player $j$, but also our opponent, and to do so we need to consider minimum over players respective communication channels:
$$
\phi_j(s) = \max_{I \subset c(s)} \{ \log_2 |c(s)| + \min_{s' \in c(s)} \min_i \phi_i(s') \}
$$
It is easy to notice that for a starting state $s_0$ we now have that the game is $\min_i \phi_i(s_0)$-bit communicative.
The last recursive equation might look intractable, due to iteration over subsets of children states. However, we can easily compute quantities like this in linear time. Let's take general form of
\begin{equation}
\max_{A \subset B} \{ g_0(|A|) + \min_{a \in A} g_1(a) \} =: \max_{A \subset B} g(A)
\label{eq:max}
\end{equation}
and let's consider Alg.~\ref{alg:efficient}.
\begin{algorithm}[htb]
   
\begin{algorithmic}
   \STATE {\bfseries Input:} functions $f$, $g$ and set $B$:
   \STATE \textbf{begin}
   \STATE $C \leftarrow \emptyset$
   \STATE $g(X) \leftarrow g_0(|X|) + \min_{x \in X} g_1(x)$
   \COMMENT{Eq.~\ref{eq:max}}
   \STATE \textbf{sort} $B$ \textbf{ in descending order of} $g_1$
   \FOR{$b \in B$}
   \IF{$g(C \cup \{b\}) > g(C)$}
        \STATE $C \leftarrow C \cup \{b\}$
   \ENDIF
   \ENDFOR
   \STATE \textbf{return} $C$
   
\end{algorithmic}
\caption{Solver for Eq.~\ref{eq:max} in $\mathcal{O}(|B|)$.}
\label{alg:efficient}
\end{algorithm}
To prove that it outputs maximum of $g$, let's assume that at any point $t$ we decided to pick $b' \neq b_t$. Since $b_t$ has highest $g$ at this point, we have $g_1(b') < g_1(b_t)$, and consequently $g(C_{t-1} \cup \{b'\}) < g(C_{t-1} \cup \{b_t\})$ so we decreased function value and conclude optimality proof.

We provide a pseudocode in Alg.~\ref{alg:example} for the two-player, turn-based case with deterministic transitions. Analogous construction will work for $k$ players, simultaneous move games, as well as games with chance nodes (one just needs to define what we want to happen there, taking minimum will guarantee transmission of bits, and taking expectation will compute expected number of bits instead).

Exemplary execution at some state of Tic-Tac-Toe is provided in Figure~\ref{fig:ttt-n}. Figure~\ref{fig:go-n} shows the construction from Proposition 1 for the game of Go.

We can use exactly the same procedure to compute $n$-communicativeness over restricted set of policies. For example let us consider strategies using MinMax algorithm to a fixed depth, between 0 and 9. Furthermore, we restrict what kind of first move they can make (e.g. only in the centre, or in a way that is rotationally invariant). Each such class simply defines a new ``leaf'' labelling of our tree or set of available children. Once we reach a state, after which considered policy is deterministic, by definition its communicativeness is zero, so we put $\phi(s) = 0$ there. Then we again run the recursive procedure. Running this analysis on the game of Tic-Tac-Toe (Fig.~\ref{fig:ttt-n2}) reveals the Spinning Top like geometry wrt. class of policies used. As MinMax depth grows, cycle length bound from Theorem 1 decreases rapidly. Similarly introducing more inductive bias in the form of selecting what are good first moves affect the shape in an analogous way. 
\begin{figure}[htb]
    \centering
    \includegraphics[width=0.5\textwidth]{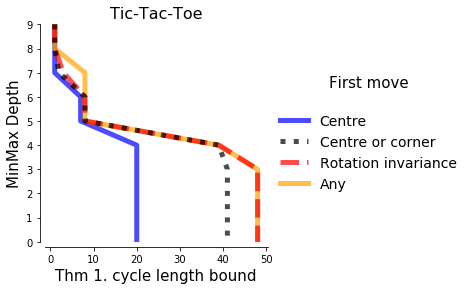}
    \caption{Visualisation of cycle bound lengths coming from Theorem 1, when applied to the game of Tic-Tac-Toe over restricted set of policies -- y axis corresponds to the depth of MinMax search (encoding transitive strength); and colour and line style correspond to restricted first move (encoding better and better inductive prior over how to play this game).}
    \label{fig:ttt-n2}
\end{figure}
This example has two important properties. First, it shows cyclic dimensions behaviour over whole policy space, as we do not rely on any sampling, but rather consider the whole space, restricting the transitive strength and using Theorem 1 as a proxy of non-transitivity. Second, it acts as an exemplification of the claim of various inductive biases restricting the part of the spinning top one needs to deal with when developing and AI for the specific game.

\begin{figure*}[tb]
    \centering
    \includegraphics[height=0.8\textheight]{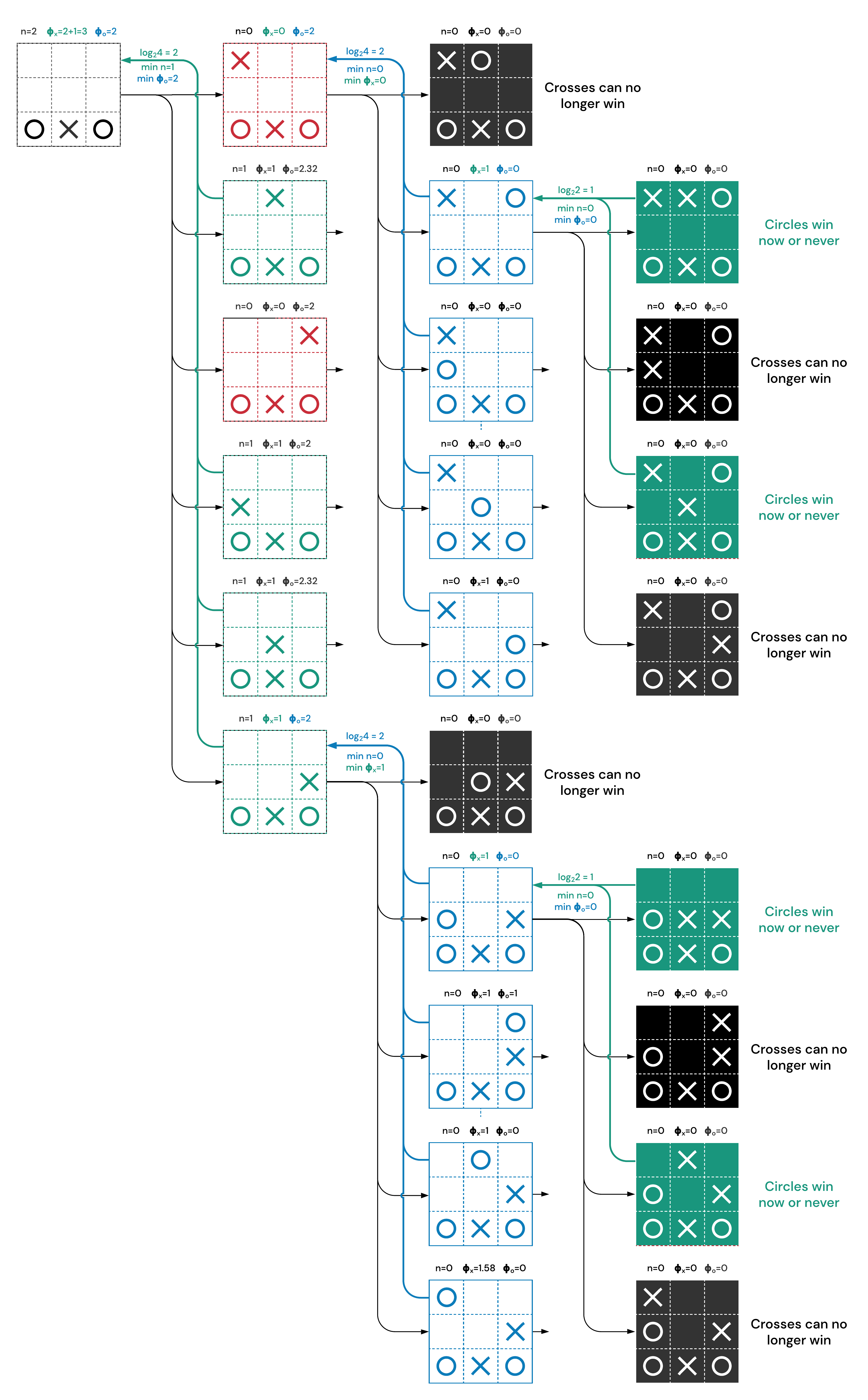}
    \caption{Partial execution of the n-communicativeness algorithm for Tic-Tac-Toe. Black nodes represent states that no longer can reach all possible outcomes. Green ones last states before all the children nodes would be either terminating or are coloured black. The selected children states (building subset $A$) are encoded in green (for crosses) and blue (for circles), with the edge captioned with number of bits transmitted (logarithm of number of possible children), minimum number of bits one can transmit afterwards, and minimum number of bits for the other player (because it is a turn based game). $n$ at each node is minimum of $\phi_x$ and $\phi_o$, while for a player  $p$ making a move in state $s$, we have $\phi_p = \max_{A \subset c(s)}\{ \log_2 |A| + \min_{s'\in A} \min_{p'} \phi_{p'}(s') \}$. Red states are the one not selected in the parent node by maximisation over subsets.}
    \label{fig:ttt-n}
\end{figure*}

\begin{figure*}[tb]
\centering
    \includegraphics[width=0.32\textwidth]{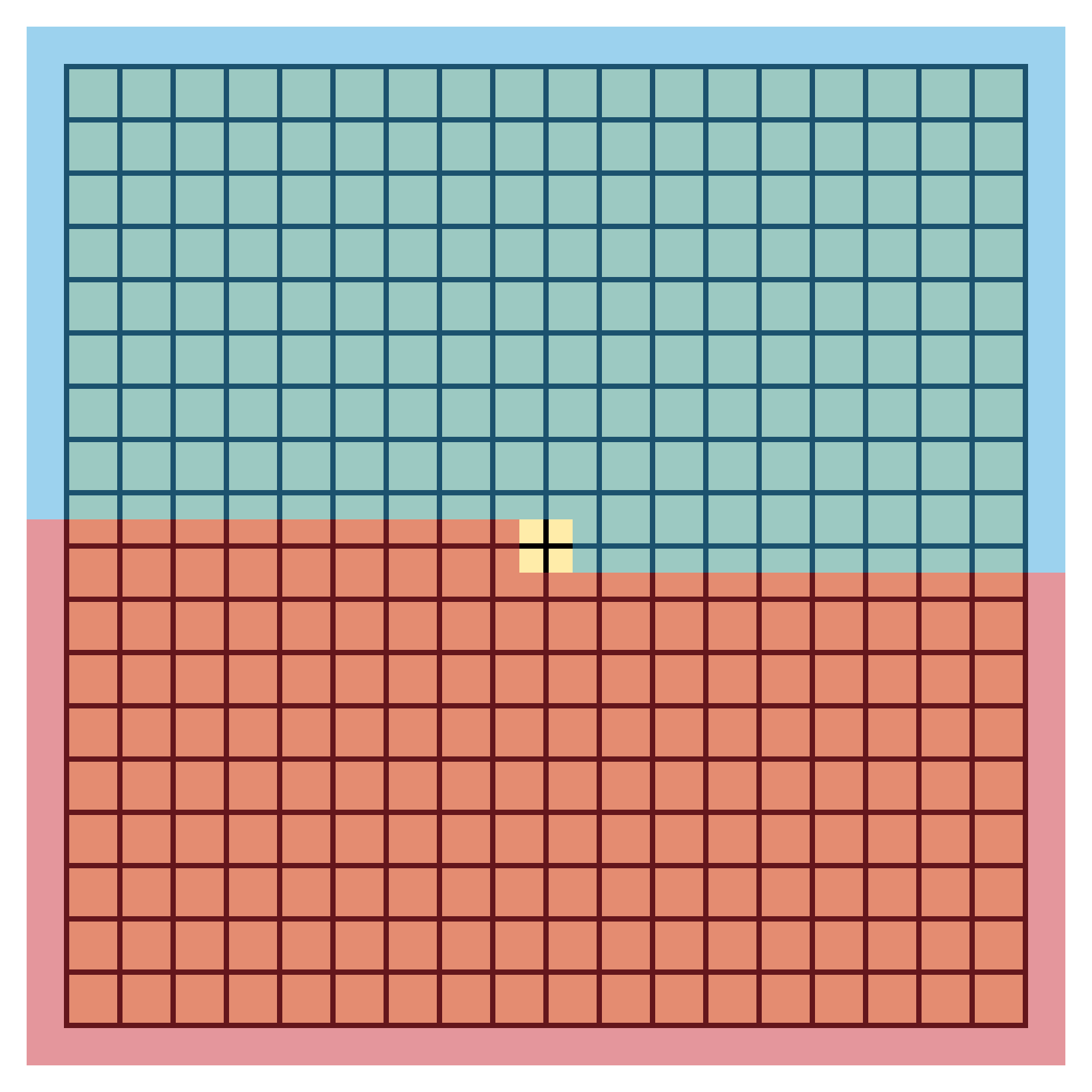}
    \includegraphics[width=0.32\textwidth]{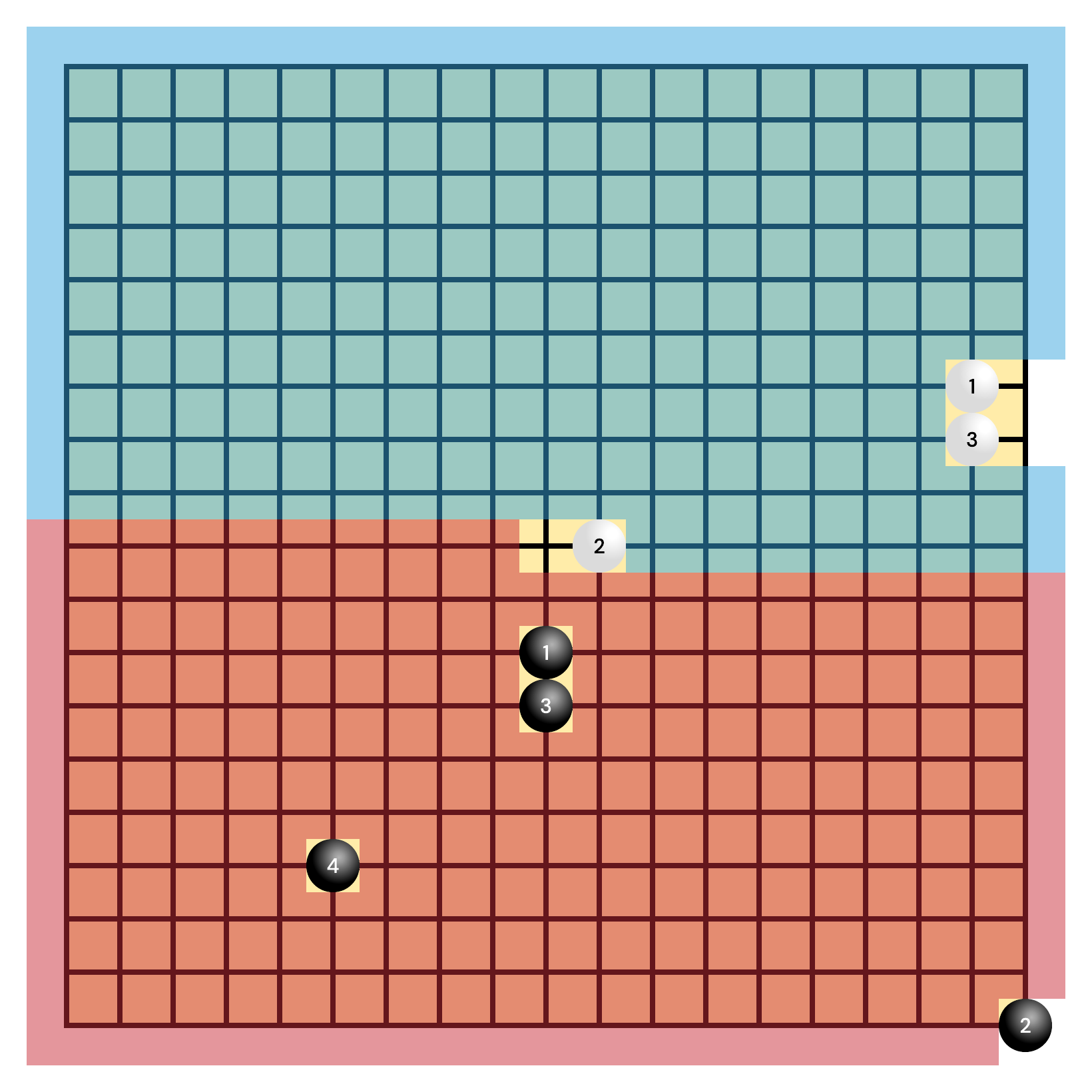}
    \includegraphics[width=0.32\textwidth]{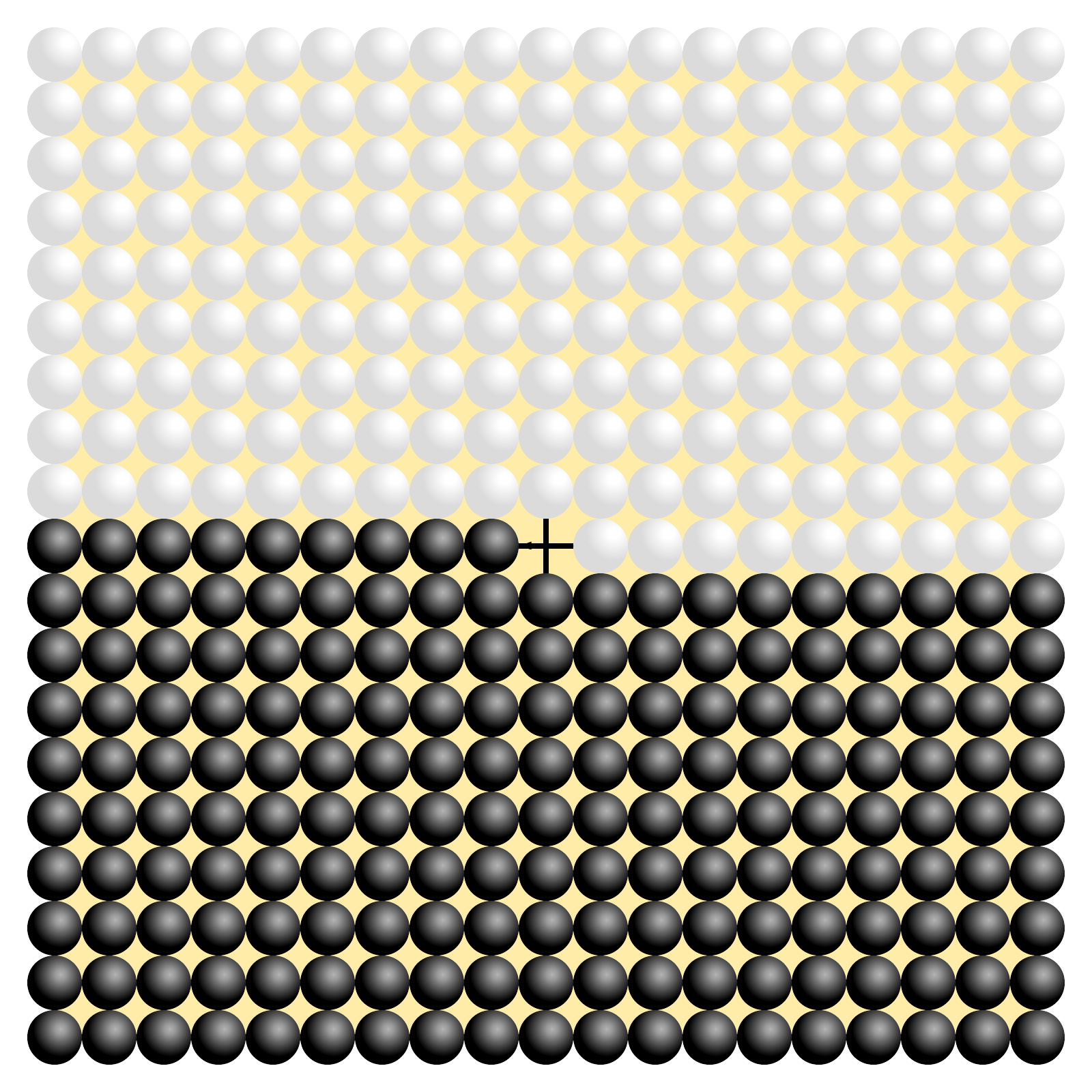}
    \caption{Visualisation of construction from Proposition 1. Left) split of the 19 x 19 Go board into regions where black stones (red), and white stones (blue) will play. Each player has 180 possible moves. Centre) Exemplary first 7 moves, intuitively, ordering of stones encodes a permutation over 180, which corresponds to $\sum_{i=1}^{180} \log_2(i) = \log_2(\prod_{i=1}^{180} i) = \log_2(180!) \approx 1000$ bits being transmitted. Right) After exactly 360 moves, board will always look like this, at which point depending on $\payoff_{\mathrm{id(black),id(white)}}$ black player will resign (if it is supposed to lose), or play the centre stone (if it is supposed to win).}
    \label{fig:go-n}
\end{figure*}

\begin{algorithm}[tb]
\begin{algorithmic}
   \STATE {\bfseries Input:} Game tree encoded with:
   \STATE - states: $s_i \in \mathcal{S}$
   \STATE - value of a state: $v(s_i) \in \{-1,0,+1,\emptyset\}$
   \STATE - set of children states $c(s_i) \subset \mathcal{S}$
   \STATE - set of parent states $d(s_i) \subset \mathcal{S}$
   \STATE - which player moves $p(s_i) \in \{0, 1\}$
   \STATE \textbf{begin}
   \COMMENT{Remove states with deterministic outcomes}
   \STATE $s_i \leftarrow \{s_i : \forall_{o \in \{-1,+1\}} \exists_{\mathrm{path}(s_i, s_j)} \wedge v(s_j) = o\}$
   \STATE \textbf{update} $c$
   \STATE $q = [s_i : c(s_i) = \emptyset]$
   \COMMENT {Init with leaves}
   \WHILE{$|q| > 0$}
        \STATE $x \leftarrow q\mathrm{.pop()}$
        \STATE $\phi(x) = \mathbf{Agg}(x)$
            \COMMENT{Alg. \ref{alg:agg}}
        \FOR{$y \in d(x)$}
            \IF{$\forall z \in c(p) \; \mathrm{defined}(\phi(z))$}
                \STATE $q$.{enqueue}($y$)
                \COMMENT{Enqueue a parent if all its children were analysed}
            \ENDIF
        \ENDFOR
   \ENDWHILE{}
   \STATE \textbf{return} $\min \phi(s_0)$
   
\end{algorithmic}
\caption{Main algorithm to compute $n$ for which a given fully observable two-player zero-sum game is $n$-bit communicative.}
\label{alg:example}
\end{algorithm}

\begin{algorithm}[tb]
   \caption{Aggregate (\textbf{Agg}) - helper function for Alg.~\ref{alg:example}}
\begin{algorithmic}
   \STATE {\bfseries Input:} State $x$
   \STATE \textbf{begin}
   \STATE $m \leftarrow [\min \phi(z) \text{ for } z \in c(x)]$
   \COMMENT{min over players}
   \STATE $o \leftarrow [\phi(z)[1-p(x)] \text{ for } z \in c(x)]$
   \COMMENT{other player bits}
   \STATE \textbf{sort $m$ in decreasing order}
   \COMMENT{Order by decreasing communicativeness}
   \STATE \textbf{order } $o$ \textbf{ in the same order}
   \STATE $b \leftarrow (0,0)$
   \FOR{ $i=1$ \textbf{to} $|c(x)|$}
        \STATE $t[p(x)] \leftarrow \min(m[:i]) + \log_2(i)$
        \STATE $t[1-p(x)] \leftarrow \min(o[:i])$
        \IF{$t[p(x)] > b[p(x)]$}
            \STATE $b = t$
            \COMMENT{Update maximum}
        \ENDIF
   \ENDFOR
   \STATE \textbf{return} $b$
\end{algorithmic}
\label{alg:agg}
\end{algorithm}

\section{Cycles counting}
In general even the problem of deciding if a graph has a simple path of length higher than some (large) $k$ is NP-hard. Consequently we focus our attention only on cycles of length 3 (which embed Rock-Paper-Scissor dynamics). For this problem, we can take adjacency matrix $A_{ij} = 1 \iff \payoff_{ij} > 0$ and simply compute $\text{diag}(A^3)$, which will give us number of length 3 cycles that pass through each node. Note, that this technique no longer works for longer cycles as $\text{diag}(A^p)$ computes number of closed walks instead of closed paths (in other words -- nodes could be repeated). For $p=3$ these concepts coincide though.

\section{Nash computation}
We use iterative maximum entropy Nash solver for both Nash clustering and RPP~\cite{balduzzi2019open} computation. 
Since we use numerical solvers, the mixtures found are not exactly Nash equilibria. To ensure that they are ``good enough'' we find a best response, and check if the outcome is bigger than -1e-4. If it fails, we continue iterating until it is satisfied. For the data considered, this procedure always terminated. While usage of maximum entropy Nash might lead to unnecessarily ``heavy'' tops of the spinning top geometry (since equivalently we could pick smallest entropy ones, which would form more peaky tops) it guaranteed determinism of all the procedures (as maximum entropy Nash is unique).

\section{Games/payoffs definition}
After construction of each empirical payoff $\payoff$, we first symmetrise it (so that ordering of players does not matter), and then standarise it ${\payoff'}_{ij} := \tfrac{\payoff_{ij} - \payoff_{ji}}{2\max |\payoff|}$ for the analysis and plotting to keep all the scales easy to compare. This has no effect on Nashes or transitive strength, and is only used for consistent presentation of the results, as ${\payoff'} \in [-1, 1]^{N \times N}$. 
For most of the games this was an identity operation (as for most $\payoff$ we had $\max \payoff  = -\min \payoff = 1$), and was mostly useful for various random games and Blotto, which have wider range of outcomes.

\subsection{Real world games}
We use OpenSpiel~\cite{lanctot2019openspiel} implementations of all the games analysed in this paper, with following setups:
\begin{itemize}
    \item Hex 3X3: \texttt{hex(board\_size=3)}
    \item Go 3X3: \texttt{go(board\_size=3,komi=6.5)}
    \item Go 4X4: \texttt{go(board\_size=4,komi=6.5)}
    \item Quoridor 3X3: \texttt{quoridor(board\_size=3)}
    \item Quoridor 4X4: \texttt{quoridor(board\_size=4)}
    \item Tic Tac Toe: \texttt{tic\_tac\_toe()}
    \item Misere Tic Tac Toe (a game of Tic Tac Toe where one wins if and onlfy if opponent makes a line): \texttt{misere(game=tic\_tac\_toe())}
    \item Connect Four: \texttt{connect\_four()}
\end{itemize}

\subsection{StarCraft II (AlphaStar)}
We use payoff matrix of the League of the AlphaStar Final~\cite{vinyals2019grandmaster} which represent a big population (900 agents) playing at a wide range of skills, using all 3 races of the game, and playing it without any simplifications. We did not run any of the StarCraft experiments.
Sampling of these strategies is least controlled, and comes from a unique way in which AlphaStar system was trained. 

This heavily skewed strategies sampling means that what we are observing is a study of AlphaStar induced game geometry, rather then necesarily geometry of the StarCraft II itself. In particular, one can ask why do we see a spinning top shape, rather than an upper cone, that we might expect given that AlphaStar agents never try to lose. The answer lies in how these strategies were created~\cite{vinyals2019grandmaster} namely -- they come from iterative process, where agents are trained to beat all the previous strategies. In such setup, despite lack of an agent actively seeking to lose, the initial strategies will act as if they were designed to do so, since every other strategy was trained to beat them, while they were never trained to defend. The non-transitivies start to emerge, once ``League exploiters'' and ``Exploiters'' are slowly added to the population, and thus building strategic diversity. While these two factors and dynamics are different from the ones that motivate the geometry in remaining experiments, it surprisingly shared the self-similarity. From the perspective of the entire game of StarCraft II however, the shape we are observing is slightly warped, and we would expect to see an upper cone, if we were given ability to sample weak strategies  more uniformly, without every other strategy being sampled conditionally on beating them. 

\subsection{Rock Paper Scissor (RPS)}
We use standard Rock-Paper-Scissor payoff of form
$$
\payoff = \left [ \begin{matrix}
0 & 1 & -1\\
-1& 0 &  1\\
1 & -1 & 0\\
\end{matrix} \right ].
$$
This game is fully cyclic, and there is no pure strategy Nash (the only Nash-equilibrium is the uniform mixture of strategies).

Maybe surprisingly, people do play RPS competitively, however it is important to note, that in ``real-life'' the game of RPS is much richer, than its game theoretic counterpart. First, it often involves repeated trials, which means one starts to reason about the strategy opponent is employing, and try to exploit it while not being exploited themselves. Second, identity of the opponent is often known, and since player are humans, they have inherit biases in the form of not being able to play completely randomly, having beliefs, preferences and other properties, that can be analysed (based on historical matches) and exploited. Finally, since the game is often played in a physical environment, there might be various subconscious tells for a given player, that inform the opponent about which move they are going to play, akin to Clever Hans phenomena.

\subsection{Disc Game}
We use definition of random game from the ``Open-ended learning in symmetric zero-sum games'' paper~\cite{balduzzi2019open}. We first sample $N=1000$ points uniformly in the unit circle $A_i \sim U(S(0,1))$ and then put
$$
\payoff_{ij} = A^{\mathrm{T}}_i \left [ \begin{matrix}
0 &-1\\
1 &0
\end{matrix}
\right ] A_j.
$$
Similarly to RPS, this game is fully cyclic.

\subsection{Elo game}
We sample Elo rating~\cite{elo1978rating} per player $S_i \sim \mathcal{U}(0, 2000)$, and then put $\payoff_{ij} := (1 + e^{-({S_i-S_j})/{400}})^{-1}$, which is equivalent of using scaled difference in strength $D_{ij} = (S_i - S_j)/400$ squashed through a sigmoid function $\sigma(x) = (1+e^{-x})^{-1}$. It is easy to see that this game is monotonic, meaning that $\payoff_{ij} > \payoff_{jk} \rightarrow \payoff_{ik}$.
 We use $N=1000$ samples.

\subsection{Noisy Elo games}
For a given noise $\epsilon > 0$ we first build an Elo game, and then  take $N^2$ independent samples from $\mathcal{N}(0, \epsilon)$ and add it to corresponding entries of $\payoff$, creating $\payoff_\epsilon$. After that, we symmetrise the payoff by putting $\payoff := \payoff_\epsilon - \payoff_\epsilon^\mathrm{T}$.
 
\subsection{Random Game of Skill}
We put $\payoff_{ij} := \tfrac{1}{2}( W_{ij} - W_{ji} ) + S_i - S_j$ where each of the random variables $W_{ij}, S_i$ comes from $\mathcal{N}(0,1)$. We use $N=1000$ samples.

\subsection{Blotto}
Blotto is a two-player symmetric zero-sum game, where each player selects a way to place N units onto K fields. The outcome of the game is simply number of fields, where a player has more units than the opponent minus the symmetric quantitiy. We choose N=10, K=5, which creates around 1000 pure strategies, but analogous results were obtained for various other setups we tested. One could ask why is Blotto getting more non-transitive as our strength increases. One simple answer is that the game is permutation invariant, and thus forces optimal strategy to be played uniformly over all possible permutations, which makes the Nash support grow. Real world games, on the other hand, are almost always ordered, sequential, in nature.

\subsection{Kuhn Poker}
Kuhn Poker~\cite{kuhn1950simplified} is a two-player, sequential-move, asymmetric game with 12 information states (6 per player).
Each player starts the game with 2 chips, antes a single chip to play, then receives a face-down card from a deck of 3 cards.
At each information state, each player has the choice of two actions, betting or passing.
We use the implementation of this game in the OpenSpiel library \citep{lanctot2019openspiel}.
To construct the empirical payoff matrices, we enumerate all possible  policies of each player, noting that some of the enumerated policies of player 1 may yield identical outcomes depending on the policy of player 2, as certain information states may not be reachable by player 1 in such situations.
Due to the randomness involved in the card deals, we compute the average payoffs using 100 simulations per pair of policy match-ups for players 1 and 2. 
This yields an asymmetric payoff matrix (due to sequential-move nature of the game), which we then symmetrise to conduct our subsequent analysis.

\subsection{Parity Game of Skill}
Let us define a simple $n$-step game (per player), that has game of skill geometry. It is a two-player, fully-observable, turn based game that lasts at most $n$-steps. Game state is a single bit $s$ with initial value 0. At each step, player can choose to: 1) flip the bit ($a_1$); 2) guess that bit is equal to 0 ($a_2$); 3) guess the bit is equal to 1 ($a_3$); 4) keep the bit as it is ($a_4$). At (per player) step $n$ the only legal actions are 2) and 3). If any of these two actions is taken, game ends, and a player wins iff it guessed correctly. Since the game is fully observable, there is no real ``guessing'' here, agents know exactly what is the state, but we use this construction to be able to study the underlying geometry in the easiest way possible. First, we note that this game is $n-1$-bit communicative, as at each turn agents can transmit $\log_2(|\{a_1, a_3\}|) = 1$ bits of information, and game lasts for $n$ steps, and the last one cannot be used to transfer information. According to Theorem 1 this means that every antisymmetric payoff of size $2^{n-1 \times n-1}$ can be realised.
\begin{figure}[htb]
    \centering
    \includegraphics[width=0.47\textwidth]{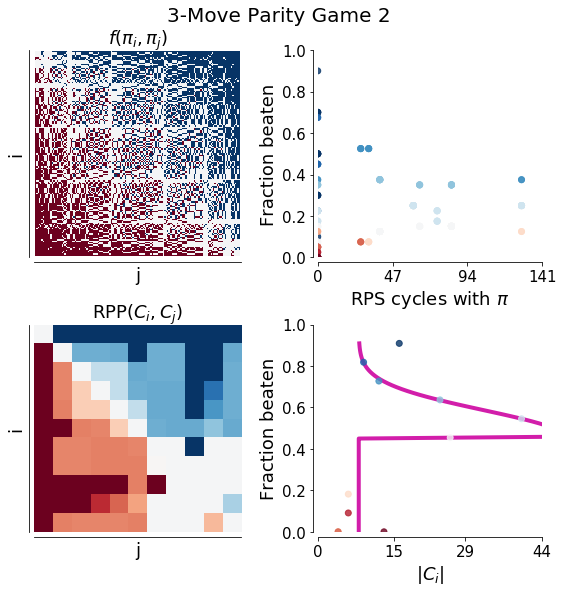}
    \caption{Game profile of Parity Game of Skill with 3 steps. Note that its Nash clusters are of size 40, and number of cycles exceeds 140, despite being only $2$-bit communicative.}
    \label{fig:parity3}
\end{figure}
Figure~\ref{fig:parity3} shows that this game with $n=3$ has hundreds of cycles, and Nash clusters of size 40, strongly exceeding lower bounds from Theorem 1. Since there are just 161 pure strategies, we do not have to rely on sampling, and we can clearly see Spinning Top like shape in the game profile.

\section{Other games that are not Games of Skill}
Table~\ref{fig:notgamesofskill} shows a few Noisy Elo Games, which cause Nashes to grow significantly over the transitive dimension. We also run analysis on Kuhn-Poker, with 64 pure policies, which seems to exhibit analogous geometry to Blotto game. Finally, there is also pure Rock Paper Scissor example, with everything degenerating to a single point.
\begin{table*}[ht]
\centering
\begin{tabular}{c|c|c}
\includegraphics[width=0.32\textwidth]{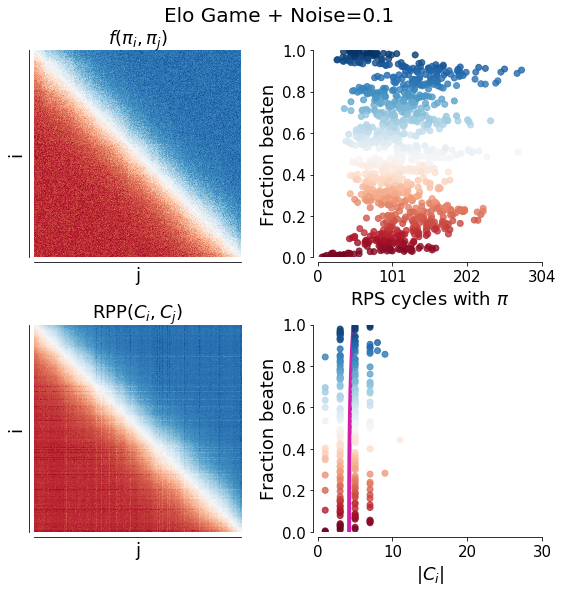}&
\includegraphics[width=0.32\textwidth]{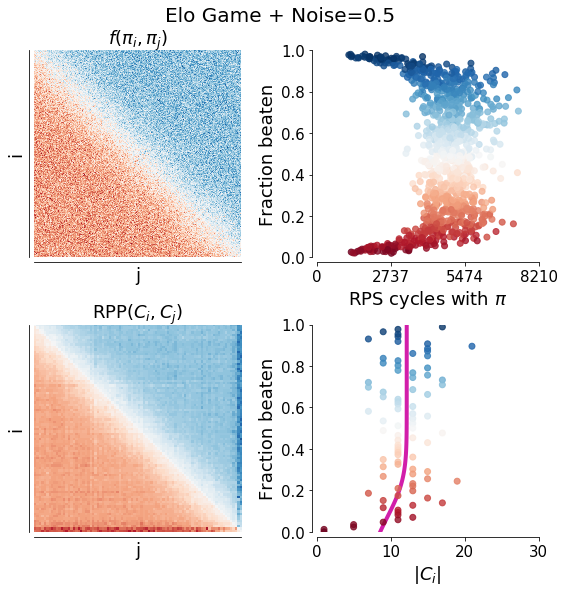}&
\includegraphics[width=0.32\textwidth]{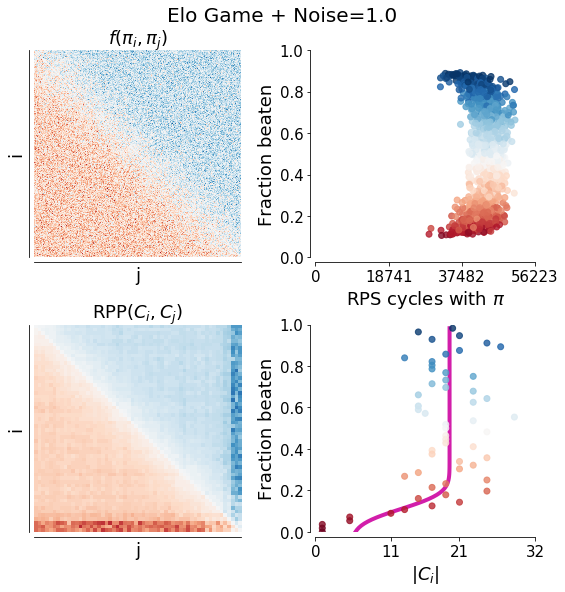}\\ \hline
\includegraphics[width=0.32\textwidth]{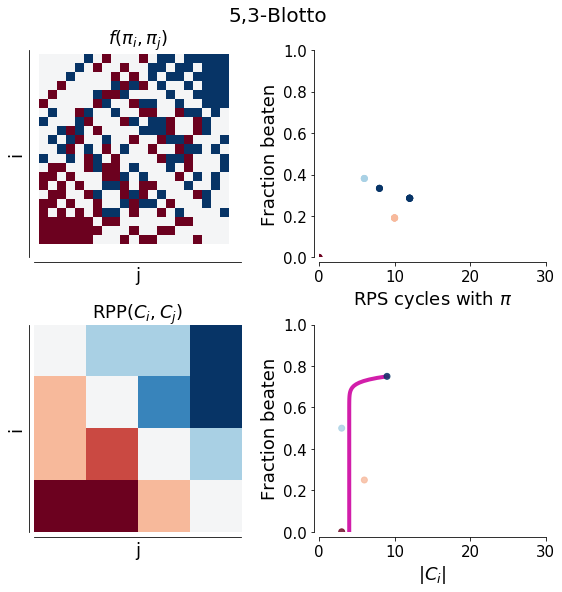}&
\includegraphics[width=0.32\textwidth]{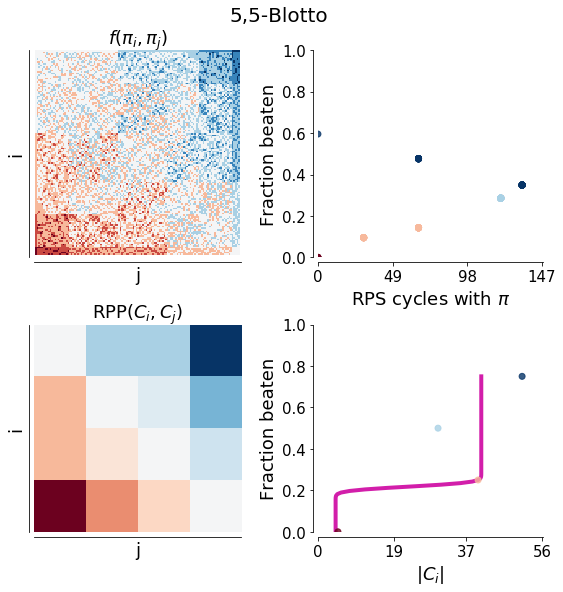}&
\includegraphics[width=0.32\textwidth]{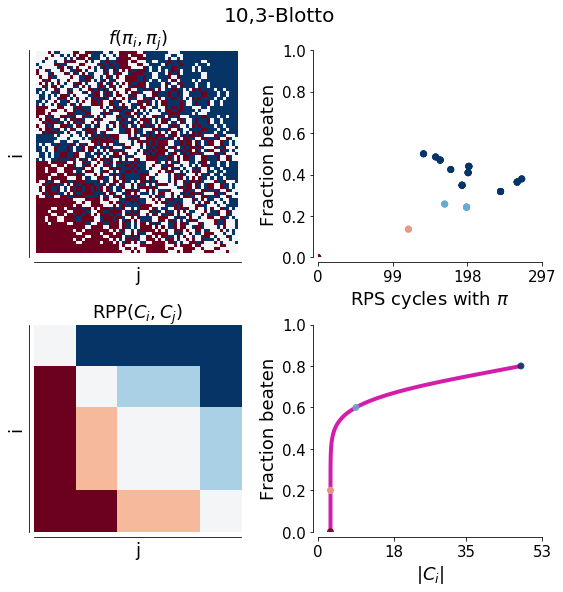}\\ \hline
\includegraphics[width=0.32\textwidth]{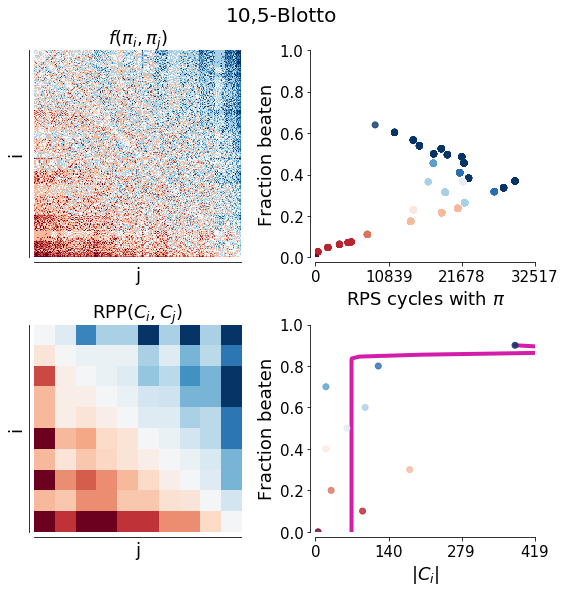} &
\includegraphics[width=0.32\textwidth]{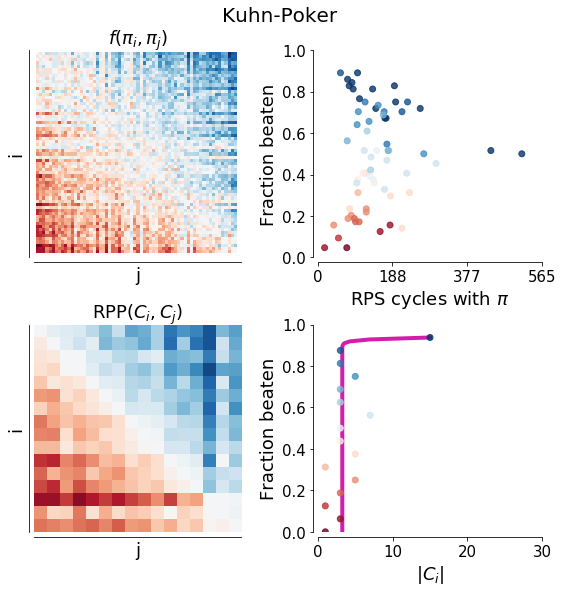}&
\includegraphics[width=0.32\textwidth]{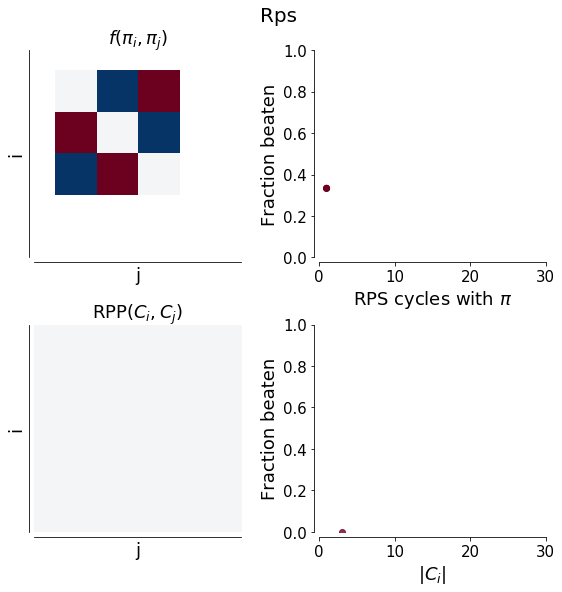}
\end{tabular}
    \caption{Top row, from left: Noisy Elo games with $\epsilon = 0.1,0.5, 1.0$ respectively. Middle row, from left: Blotto with $N, K$ equal $5,3$; $5,5$, $10,3$ and $10,5$ respectively.
    Bottom row, from left: Kuhn-Poker and Rock Paper Scissors.}
    \label{fig:notgamesofskill}
\end{table*}

\section{Empirical Game Strategy Sampling}
We use OpenSpiel~\cite{lanctot2019openspiel} implementations of AlphaBeta and MCTS players as base of our experiments. We expand AlphaBeta player to \texttt{MinMax(d, s)}, which runs AlphaBeta algorithm up till depth $d$, and if it did not succeed (game is deeper than $d$) then it executes random action using seed $s$ instead. We also define \texttt{MaxMin(d, s)} which acts in exactly same way, but uses flipped payoff (so seeks to lose). We also include \texttt{MinMax'(d, s)} and \texttt{MinMax(d, s)} which act in the same way as before, but if some branches of the game tree are longer than $d$, then they are assumed to have value of 0 (in other words these use the value function that is contantly equal to 0). 
Finally we define $\texttt{MCTS}(k,s)$ which runs $k$ simulations, and randomness is controlled by seed $s$. With these 3 types of players, we create a set of agents to evaluate of form:
\begin{itemize}
    \item \texttt{MinMax(d,s)} for each combination of $$d \in \{0,1,2,3,4,5,6,7,8,9\}, s \in \{1, \dots, 50\}$$
    \item \texttt{MinMax'(d,s)} for each combination of $$d \in \{1,2,3,4,5,6,7,8,9\}, s \in \{1, \dots, 50\}$$
    \item \texttt{MaxMin(d,s)} for each combination of $$d \in \{1,2,3,4,5,6,7,8,9\}, s \in \{1, \dots, 50\}$$
    \item \texttt{MaxMin'(d,s)} for each combination of $$d \in \{1,2,3,4,5,6,7,8,9\}, s \in \{1, \dots, 50\}$$
    \item \texttt{MCTS(k,s)} for each combination of $$k \in \{10, 100, 1000\}, s \in \{1, \dots, 50\}$$
\end{itemize}
This gives us 2000 pure strategies, that span the transitive axis. Addition of MCTS is motivated by the fact that many of our games are too hard for AlphaBeta with depth 9 to yield strong policies. Also \texttt{MinMax(0,s)} is equivalent to a completely random policy with a seed $s$, and thus acts as a sort of a baseline for randomly initialised neural networks.
Each of players constructed this way codes a pure strategy (as thanks to seeding that act in a deterministic way).

\section{Empirical Game Payoff computation}
For each game and pair of corresponding pure strategies, we play 2 matches, swapping which player goes first. 
We report payoff which is the average of these two situations, thus effectively we symmetrise games, which are not purely symmetric (due to their turn based nature). After this step, we check if there are any duplicate rows, meaning that two strategies have exactly the same payoff against every other strategy. We remove them from the game, treating this as a side effect of strategy sampling, which does not guarantee uniqueness (e.g. if the game has less than 2000 pure strategies, than naturally we need to sample some multiple times). Consequently each empirical game has a payoff not bigger than $2000 \times 2000$, and on average they are closer to $1000 \times 1000$.

\section{Fitting spinning top profile}
For each plot relating mean RPP to size of Nash clusters, we construct a dataset
$$
X :=   \{ (x_i, y_i) \}_{i=1}^k = \left \{  \left (\tfrac{1}{k} \sum_{j=1}^k \mathrm{RPP}(\cluster_i, \cluster_j), |\cluster_i| \right ) \right \}_{i=1}^k.
$$
Next, we use Skewed Normal pdf as a parametric model:
$$
\psi(x|\mu, \sigma, \alpha) = \sigma^2 [ 2 \phi((x-\mu)/\sigma^2) \Phi(\alpha (x-\mu)/\sigma^2 ) ],
$$
where $\phi$ is a pdf of a standard Gaussian, and $\Phi$ its cdf.
We further compose this model with simple affine transformation since our targets are not normalised and not guaranteed to equal to 0 in infinities:
$$
\psi'(x|\mu, \sigma, \alpha, a, b) = a \psi(x|\mu, \sigma, \alpha) + b, \cdot
$$
and find parameters $\mu, \sigma, \alpha, a, b$ minimising
$$
\ell(\mu, \sigma, \alpha, a, b) = \sum_{i=1}^k \| \psi'(x_i|\mu, \sigma, \alpha, a, b) - y_i \|^2.
$$
In general, using probability of data under the MLE skewed normal distribution model could be used as a measure of ``game of skillness'', but its applications and analysis is left for future research.


\section{Counting pure strategies}
For a given 2 player turn-based game we can compute number of behaviourally different pure strategies by traversing the game tree, and again using a recursive argument. 
Using notation from previous sections, and $z_j$ to denote number of pure strategies for player $j$ we put, for each state $s$ such that $p(s)=j$:
$$
z_j(s) = \left \{
\begin{matrix}
1 &\text{, if } \mathrm{terminal}(s)\\
\sum_{s' \in c(s)} \left [ \prod_{s'' \in c(s')} z_{j}(s'') \right ] &,\text{otherwise}
\end{matrix}
\right .
$$
where the second equation comes from the fact, that two pure strategies are behaviourally different if there exists a state, that both reach when facing some opponent, and they take different action there. So to count pure strategies, we simply sum over all our actions, but need to take product of opponent actions that follow, as our strategy needs to be defined in each of possible opponent moves, and each such we multiply in how many ways we can follow from there, completing the recursion. If we now ask our strategies to be able to play as both players (since in turn-based games are asymmetric) we simply report $z_1(s_0) \cdot z_{-1}(s_0)$, since each combination of behaviour as first and second player is a different pure strategy.

For Tic-Tac-Toe $z_1(s_0) \approx 10^{124}$ and $z_{-1}(s_0) \approx 10^{443}$ so in total we have approximately $10^{567}$ pure strategies that are behaviourally different. Note, that behavioural difference does not imply difference in terms of payoff, however difference in payoff implies behavioural difference. Consequently this is an upper bound on number of size of the minimal payoff describing Tic-Tac-Toe as a normal form game.

\section{Deterministic strategies and neural network based agents}
Even though neural network based agents are technically often mixed strategies in the game theory sense (as they involve stochasticity coming either from Monte Carlo Tree Search, or at least from the use of softmax based parametrisation of the policy), in practise they were found to become almost purely deterministic as training progresses~\cite{a3c}, so modelling them as pure strategies has empirical justification. However, study and extension of presented results to the mixed strategies regime is an important future research direction.

\section{Random Games of Skill}
\label{sec:random}
We show that random games also exhibit a spinning top geometry and provide a possible model for Games of Skill, which admits more detailed theoretical analysis.
\begin{dfn}[Random Game of Skill] We define a payoff of a Random Game of Skill as a random antisymmetric matrix, where each entry equals:
$$
\fpayoff(\pi_i,\pi_j) := \tfrac{1}{2}(Q_{ij} - Q_{ji}) = \tfrac{1}{2}(W_{ij} - W_{ji}) + S_i - S_j
$$
where $Q_{ij} = W_{ij} + S_i - S_j,$ and $W_{ij}, S_i$ are iid of $\mathcal{N}(0, \sigma_W^2)$ and $\mathcal{N}(0, \sigma_S^2)$ respectively, where $\sigma = \max\{\sigma_W, \sigma_S\}$.
\end{dfn}
The intuition behind this construction is that $S_i$ will capture part of the \emph{transitive strength} of a strategy $\pi_i$. If all the $W_{ij}$ components were removed then the game would be fully \emph{monotonic}. It can be seen as a linear version of a common Elo model~\cite{elo1978rating}, where each player is assigned a single ranking, which is used to estimate winning probabilities. On the other hand, $W_{ij}$ is responsible for encoding all interactions that are specific only to $\pi_i$ playing against $\pi_j$, and thus can represent various non-transitive interactions (i.e. cycles) but due to randomness, can also sometimes become transitive. 

Let us first show that the above construction indeed yields a Game of Skill, by taking an instance of this game of size $n \times n$. 
\begin{prop}
If $\max_{i,j} |W_{ij}|< \tfrac{\alpha}{2}$ then
the difference between maximal and minimal $S_i$ in each Nash cluster $C_a$ is bounded by $\alpha$: $$\forall_a \max_{\pi_i \in C_a} S_i - \min_{\pi_j \in C_a} S_j \leq \alpha.$$
\end{prop}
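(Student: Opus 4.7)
The plan is to argue by contradiction using the fact that a strictly dominated pure strategy cannot belong to the support of any Nash equilibrium, hence cannot appear in any Nash cluster. Suppose some cluster $C_a$ contained strategies $\pi_i, \pi_j$ with $S_i - S_j > \alpha$, where $\pi_i$ achieves the maximum and $\pi_j$ the minimum of $S$ within $C_a$. My goal would be to show that under the hypothesis $\max_{k,\ell}|W_{k\ell}|<\alpha/2$, the large transitive gap $S_i - S_j$ forces $\pi_i$ to strictly dominate $\pi_j$ on the entire restricted strategy set $\policyset\setminus\bigcup_{k<a} C_k$ used when defining $C_a$.

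The main computation is a straightforward one-line estimate. For any strategy $\pi_k$ in the restricted set, expanding $\fpayoff$ gives
\begin{equation*}
\fpayoff(\pi_i,\pi_k)-\fpayoff(\pi_j,\pi_k) = \tfrac{1}{2}\bigl(W_{ik}-W_{ki}-W_{jk}+W_{kj}\bigr) + (S_i-S_j).
\end{equation*}
Using $|W_{\cdot\cdot}|<\alpha/2$ bounds the first term by $\alpha$ in absolute value, so it is strictly greater than $-\alpha$, while the second term exceeds $\alpha$ by assumption. Thus the whole expression is strictly positive for every $\pi_k$, which is exactly strict dominance of $\pi_j$ by $\pi_i$ in the restricted game (including the corner cases $\pi_k=\pi_i$ and $\pi_k=\pi_j$, which need no separate treatment since antisymmetry makes $\fpayoff(\pi_i,\pi_i)=\fpayoff(\pi_j,\pi_j)=0$ consistent with the bound).

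To finish, I would invoke the classical fact that in a symmetric zero-sum game a strictly dominated pure strategy receives probability zero in every symmetric Nash equilibrium: if a mixed strategy $p$ placed positive weight on $\pi_j$, then shifting that weight onto $\pi_i$ would strictly improve the expected payoff against $p$, so $\pi_j$ cannot be a best response and hence cannot lie in the support. Applied to the Nash equilibrium whose support defines $C_a$, this contradicts $\pi_j\in C_a$. I do not foresee any serious obstacle — the argument is essentially a domination-by-slack calculation — the only mildly delicate point is being careful that strict dominance holds against every remaining strategy (so that the strict version of the ``no dominated strategy in Nash support'' lemma applies), which the strict inequality $S_i - S_j > \alpha$ together with the strict bound on $|W_{\cdot\cdot}|$ indeed guarantees.
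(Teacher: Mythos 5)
Your proposal is correct and follows essentially the same route as the paper's proof: assume two strategies in one Nash cluster have an $S$-gap exceeding $\alpha$, compute $\fpayoff(\pi_i,\pi_k)-\fpayoff(\pi_j,\pi_k) = \tfrac{1}{2}(W_{ik}-W_{ki}-W_{jk}+W_{kj}) + (S_i-S_j) > -\alpha + \alpha = 0$, and conclude the dominated strategy cannot lie in the Nash support. You are in fact slightly more careful than the paper, which leaves implicit both the restriction to the residual strategy set defining $C_a$ and the standard no-dominated-strategy-in-support lemma.
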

\begin{proof}
Let us hypothesise otherwise, so we have a Nash with strategy $\pi_a$ and $\pi_b$ such that $S_a - S_b > \alpha$.
Let us show that $\pi_a$ has to achieve better outcome against each strategy $\pi_c$ than $\pi_b$
\begin{equation}
\begin{aligned}
&\fpayoff(\pi_a, \pi_c) - \fpayoff(\pi_b, \pi_c)\\
&= \tfrac{1}{2}(W_{ac} - W_{ca} - W_{bc} + W_{cb}) + (S_a - S_b)\\
&> \tfrac{1}{2}(W_{ac} - W_{ca} - W_{bc} + W_{cb})  + \alpha  \\ & \geq 0
\end{aligned}
\end{equation}
consequently $\pi_b$ cannot be part of the Nash, contradiction.

Furthermore Nashes supports will be highest around $0$ transitive strength, where most of the probability mass of $S_i$ distribution is centred, and go towards $0$ as they go to $\pm \infty$.
\end{proof}
First, let us note that as the ratio of $\sigma_S$ to $\sigma_W$ grows, this implies that the number of Nash clusters grows as each of them has upper bounded difference in $S_i$ by $\alpha$ that depends on magnitude of $\sigma_W$, while high value of $\sigma_S$ guarantees that there are strategies $\pi_i$ with big differences in corresponding $S_i$'s. This constitutes of the transitive component of the random game. To see that the clusters sizes are concentrated around zero, lets note that because of the zero-mean assumption of $S_i$, this is where majority of $S_i$'s are sampled from. As a result, there is a higher chance of $W_{ij}$ forming cycles there, then it is in less densely populated regions of $S_i$ scale. With these two properties in place . Figure~\ref{fig:random} further visualises this geometry.
\begin{figure}[htb]
    \centering
    \includegraphics[width=0.48\textwidth]{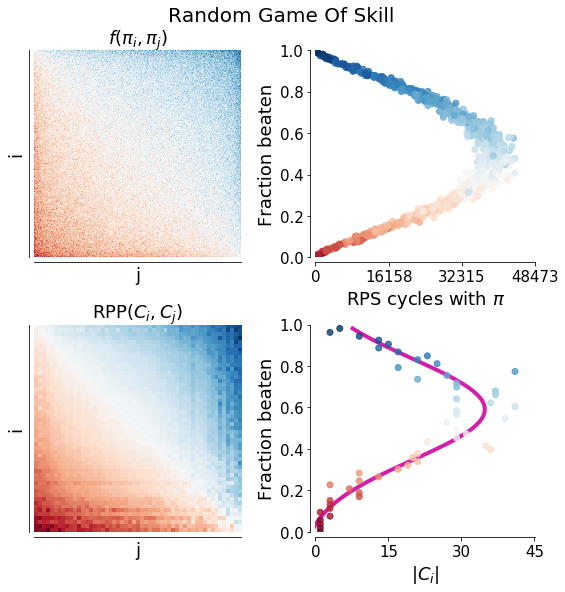}
    \caption{Game profile of the random Game of Skill. Upper left: payoff matrix; Upper right: relation between fraction of strategies beaten for each strategy and number of RPS cycles it belongs to (colour shows which Nash cluster this strategy belongs to); Lower left: payoff between Nash clusters in terms of RPP~\citep{balduzzi2019open}; Lower right: relation between fraction of clusters beaten wrt. RPP and the size of each Nash cluster. Payoffs are sorted for easier visual inspection.}
    \label{fig:random}
\end{figure}
This shape can also be seen by considering the limiting distribution of mean strengths.
\begin{prop}
As the game size grows, for any given $k\in [n]$ the average payoff $\frac{1}{n} \sum_{i=1}^n \fpayoff(\pi_k,\pi_i)$ behaves like 
$ \mathcal{N}(S_k,\tfrac{2\sigma^2}{n})$.
\end{prop}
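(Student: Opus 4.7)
The plan is to expand the average payoff using the linear decomposition of $\fpayoff$ from the definition and then identify it as a sum of independent Gaussians once we condition on $S_k$. Concretely, I first write
\begin{equation*}
\frac{1}{n}\sum_{i=1}^n \fpayoff(\pi_k,\pi_i) \;=\; \frac{1}{n}\sum_{i=1}^n\left[\tfrac{1}{2}(W_{ki}-W_{ik}) + S_k - S_i\right],
\end{equation*}
note that the $i=k$ term vanishes since $W_{kk}-W_{kk}=0$ and $S_k-S_k=0$, and rearrange to obtain
\begin{equation*}
\frac{1}{n}\sum_{i=1}^n \fpayoff(\pi_k,\pi_i) \;=\; S_k\!\left(1-\tfrac{1}{n}\right) \;-\; \frac{1}{n}\sum_{i\neq k} S_i \;+\; \frac{1}{2n}\sum_{i\neq k}\bigl(W_{ki}-W_{ik}\bigr).
\end{equation*}

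Next I would observe that, conditional on $S_k$, the second and third terms are independent sums of iid zero-mean Gaussians (the $W_{ki}, W_{ik}$ for $i\neq k$ are independent of each other and of all $S_i$), so their sum is exactly Gaussian. It then suffices to compute the two moments. The conditional mean is $S_k(1-1/n)$, which converges to $S_k$ as $n\to\infty$. The conditional variance is
\begin{equation*}
\frac{n-1}{n^2}\,\sigma_S^2 \;+\; \frac{1}{4n^2}\cdot (n-1)\cdot 2\sigma_W^2 \;=\; \frac{(n-1)(2\sigma_S^2+\sigma_W^2)}{2n^2},
\end{equation*}
using that $\mathrm{Var}(W_{ki}-W_{ik})=2\sigma_W^2$ by independence.

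Finally, using the assumption $\sigma=\max\{\sigma_W,\sigma_S\}$, I would bound $2\sigma_S^2+\sigma_W^2 \le 3\sigma^2$ and hence the variance by $\tfrac{3(n-1)\sigma^2}{2n^2} \le \tfrac{2\sigma^2}{n}$, so for large $n$ the distribution behaves like $\mathcal{N}(S_k,\tfrac{2\sigma^2}{n})$ as claimed. There is no real obstacle: everything is exact algebra with independent Gaussians. The only subtlety worth flagging is interpreting the constant $2\sigma^2$ as a convenient upper bound on the true variance $(2\sigma_S^2+\sigma_W^2)/(2n) + O(n^{-2})$, and the factor $(1-1/n)$ on the mean, both of which are benign in the $n\to\infty$ limit referenced by the statement.
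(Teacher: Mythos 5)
Your proof is correct and follows essentially the same route as the paper's: both expand $\fpayoff(\pi_k,\pi_i)$ via the defining decomposition into $S_k$ plus a zero-mean average of the $S_i$'s and the antisymmetrised $W$ terms, and then bound the resulting variance by $\tfrac{2\sigma^2}{n}$. The only difference is cosmetic --- you exploit exact Gaussianity conditional on $S_k$ and compute the variance explicitly, whereas the paper invokes the central limit theorem and merely notes each summand has variance bounded by $\sigma^2$; your version is the more careful rendering of the same argument.
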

\begin{proof}
$$ \tfrac{1}{n} \sum_{j=1}^n \fpayoff(\pi_k,\pi_j)=S_k + \tfrac{1}{n}\sum_{j=1}^n W_{kj} - \tfrac{1}{n}\sum_{j=1}^n S_j.$$
Using the central limit theorem and the fact that $\mathbb{E}[W_{ij}] = \mathbb{E}[S_j] = 0 \,,\, 1\leq j\leq n$ and that these variables have a variance bounded by $\sigma^2$.
\end{proof}
Now, let us focus our attention on training in such a game, given access to a uniform improvement oracle, which given a set of $m$ opponents returns a uniformly selected strategy from strategy space, among the ones that beat all of the opponents, we will show probability of improving average transitive strength of our population at time $t$, denoted as $\bar S_t$.
\begin{thm}
Given a uniform improvement oracle we have that,
$
    \bar S_{t+1} > \bar S_t - W,
$
where $W$ is a random variable of zero mean and variance $\frac{\sigma^2}{m^4}$. Moreover, we have 
$
    \mathbb{E}[\bar S_{t+1}] >  \mathbb{E}[\bar S_t].
$
\end{thm}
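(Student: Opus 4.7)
The plan is to exploit the oracle's defining constraint —- that the new strategy beats every current population member with respect to $\mathbf{f}$ —- and translate it into a near-deterministic lower bound on $S_{\text{new}}$ relative to the running mean $\bar S_t$, then propagate this bound through the population update rule.

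First I would average the $m$ pointwise constraints $\mathbf{f}(\pi_{\text{new}},\pi_j) > 0$ over $\pi_j$ in the current population $\mathcal{P}^t$. Expanding $\mathbf{f}$ via the Random Game of Skill decomposition, the $S_j$ terms on the right collapse to $\bar S_t$, giving
\begin{equation}
S_{\text{new}} \;>\; \bar S_t \;-\; \frac{1}{2m}\sum_{\pi_j \in \mathcal{P}^t}\bigl(W_{\text{new},j} - W_{j,\text{new}}\bigr) \;=:\; \bar S_t - W_0.
\end{equation}
Independence of the $W_{ij}$ and linearity of variance immediately give $\mathbb{E}[W_0]=0$ and $\mathrm{Var}(W_0)=\sigma_W^2/(2m)$.

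Next I would substitute this inequality into the replacement update $\bar S_{t+1}-\bar S_t = \tfrac{1}{m}(S_{\text{new}}-S_{\text{rep}})$, which yields $\bar S_{t+1} > \bar S_t - W$ where $W := m^{-1}\bigl(W_0+(S_{\text{rep}}-\bar S_t)\bigr)$. Both constituents are zero-mean (for $W_0$ by construction, for $S_{\text{rep}}-\bar S_t$ by the exchangeability of the $S_i$ across population slots), so $\mathbb{E}[W]=0$, and chasing the $1/m$ factors through the CLT scaling of each summand delivers the stated $\mathrm{Var}(W)=\Theta(\sigma^2/m^4)$ bound with $\sigma=\max\{\sigma_W,\sigma_S\}$. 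The expectation claim then follows by taking expectations of the almost-sure inequality $\bar S_{t+1} > \bar S_t - W$: $\mathbb{E}[W]=0$ yields $\mathbb{E}[\bar S_{t+1}] \geq \mathbb{E}[\bar S_t]$, and strictness comes from the constraint being strict combined with the continuous Gaussian density of $W_0$, which guarantees positive-probability slack in the threshold $S_{\text{new}}>\bar S_t - W_0$.

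The main obstacle is the conditional sampling induced by the oracle: the uniform distribution is over strategies satisfying $m$ coupled linear constraints, so the marginal of $W_0$ under the oracle is not exactly that of an unconstrained Gaussian sum, and in principle the conditioning can shift its mean and inflate its variance. I would handle this by first conditioning on the fresh $W_{\text{new},j}$'s so that the remaining constraint reduces to a single threshold $S_{\text{new}} > \bar S_t - W_0$, applying the truncated Gaussian mean identity (which shows the conditional mean of $S_{\text{new}}$ strictly exceeds that threshold by a Mills-ratio term — this is what makes the final expectation strict rather than weak), and then integrating out the $W$'s using the joint Gaussian symmetry of the design to recover the claimed moments of $W$. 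This post-conditioning moment control is the technical crux of the argument.
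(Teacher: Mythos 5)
Your route diverges from the paper's at one structural point, and that is exactly where it breaks. The paper does \emph{not} average the $m$ oracle constraints: it uses only the single constraint $\fpayoff(\pi_a,\pi_b)>0$ against the \emph{replaced} strategy $\pi_b$, which gives $S_a-S_b>-\tilde W_{ab}$ with $\tilde W_{ab}:=\tfrac{1}{2}(W_{ab}-W_{ba})$, hence $\bar S_{t+1}>\bar S_t-\tfrac{1}{m}\tilde W_{ab}$; the noise term $W=\tfrac{1}{m}\tilde W_{ab}$ then contains only non-transitive components. By averaging all $m$ constraints you bound $S_{\mathrm{new}}$ relative to $\bar S_t$ rather than relative to $S_{\mathrm{rep}}$, and you must fold the residual $\tfrac{1}{m}\bigl(S_{\mathrm{rep}}-\bar S_t\bigr)$ into $W$. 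That term kills the variance claim: conditionally on the population, $S_{\mathrm{rep}}-\bar S_t$ has variance equal to the empirical spread of the $S_i$'s over the $m$ slots, which is $\Theta(\sigma_S^2)$ and does not decay in $m$, so after the explicit $1/m$ it contributes $\Theta(\sigma_S^2/m^2)$, while the $W_0/m$ piece contributes only $\sigma_W^2/(2m^3)$. No bookkeeping of $1/m$ factors turns $\Theta(\sigma^2/m^2)$ into $\sigma^2/m^4$; the claimed order simply does not follow from your decomposition. (The paper's own variance line is itself written loosely, but its $W$ at least contains no $S$-dependent term, so nothing in it is tied to the transitive spread of the population.)

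Two smaller points. Your observation that the oracle's conditioning (uniform sampling among strategies beating \emph{all} of $I_t$) distorts the marginal of the fresh $\tilde W$'s is a genuine subtlety that the paper silently ignores, and flagging it is to your credit, though your truncated-Gaussian fix is only sketched and the conditioning couples all $m$ constraints jointly rather than reducing to a single threshold. However, the Mills-ratio machinery is not needed for the expectation claim: as in the paper, once you have the almost-sure strict inequality $\bar S_{t+1}>\bar S_t-W$ with $\mathbb{E}[W]=0$, the integrable, almost-surely positive gap already forces $\mathbb{E}[\bar S_{t+1}]>\mathbb{E}[\bar S_t]$. The clean repair of your argument is to drop the averaging over constraints entirely and use only the constraint against the replaced member, so that the $S$-terms cancel identically and $W$ reduces to $\tfrac{1}{m}\tilde W_{ab}$.
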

\begin{proof}
Uniform improvement oracle, given a set of index of strategies $I_t \subset [n]$ (the current members of our population) returns an index $i_t$ such that,
$$
\forall_{i \in I_t} \fpayoff(\pi_{i_t},\pi_i) > 0
 \quad \text{i.e.}\quad  \pi_{i_t} \; \text{beats any} \; \pi_i \,,\, i \in I_t$$
and creates $I_{t+1}$ that consists in replacing a randomly picked $i \in I_t$ by $i_t$. If the oracle cannot return such index then the training process stops. 
What we care about is the average skill of the population described by $I_t$, 
$
    \bar S_t := \tfrac{1}{m} \sum_{i \in I_t}S_i,
$
where $m := |I_t|$.
By the definition of a uniform improvement oracle we have,
\begin{equation}
    \forall_{i \in I_t} \fpayoff(\pi_{i_t},\pi_i) > 0
\end{equation}
Thus, if we call $a:= i_t$ and $b$ is the index of the replaced strategy we get
\begin{align}
    \tfrac{1}{m} \sum_{i \in I_{t+1}}S_i 
    &= \tfrac{1}{m} \sum_{i \in I_t}S_i + \frac{1}{m} (S_{a} - S_{b}) \\
    &= \tfrac{1}{m} \sum_{i \in I_t}S_i + \frac{1}{m} (Q_{ab} - \tilde W_{ab})\\
     &> \tfrac{1}{m} \sum_{i \in I_t}S_i  - \frac{1}{m}  \tilde W_{ab}.
\end{align}
where $\tilde W_{ij} := \tfrac{1}{2}(W_{ij} - W_{ji})$.
This concludes the first part of the theorem. For the second part we notice that since the strategy in $I_t$ is replaced uniformly and $\tilde W_{ij}\,, 1 \leq i<j\leq n$ are independent of variance bounded by $\sigma^2$, we have,
\begin{equation}
  \mathrm{Var}\left [\tfrac{1}{m}  W_{aj} \right ] = \tfrac{1}{m^2} \mathbb{E}\left [ \tfrac{1}{m} \sum_{j \in I_t} W_{aj} \right ] = \tfrac{\sigma^2}{m^4}
\end{equation}
Finally taking the expectation conditioned on $I_t$, we get
\begin{equation}
      \mathbb{E} \left [\tfrac{1}{m} \sum_{i \in I_{t+1}}S_i| I_t \right] > \tfrac{1}{m} \sum_{i \in I_t}S_i.
\end{equation}
\end{proof}
The theorem shows that the size of the population, against which we are training, has a strong effect on the probability of transitive improvement, as it reduces the variance of $W$ at a quartic rate.
This result concludes our analysis of random Games of Skill, we now follow with empirical confirmation of both the geometry and properties predicted made above.


\end{toappendix}

\end{document}